\def\eqref#1{equation~\ref{#1}}
\def\1{\bm{1}}
\def\va{{\bm{a}}}
\def\vb{{\bm{b}}}
\def\vu{{\bm{u}}}
\def\vw{{\bm{w}}}
\def\vx{{\bm{x}}}
\def\mA{{\bm{A}}}
\def\mB{{\bm{B}}}
\def\mD{{\bm{D}}}
\def\mI{{\bm{I}}}
\def\mP{{\bm{P}}}
\DeclareMathAlphabet{\mathsfit}{\encodingdefault}{\sfdefault}{m}{sl}
\SetMathAlphabet{\mathsfit}{bold}{\encodingdefault}{\sfdefault}{bx}{n}
\def\gD{{\mathcal{D}}}
\def\gN{{\mathcal{N}}}
\def\gO{{\mathcal{O}}}
\def\gU{{\mathcal{U}}}
\newcommand{\E}{\mathbb{E}}
\newcommand{\R}{\mathbb{R}}
\newcommand{\lr}{\alpha}
\newcommand{\Var}{\mathrm{Var}}
\DeclareMathOperator*{\argmin}{arg\,min}
\newtheorem{thm}{Theorem}
\newtheorem{lem}{Lemma}
\newtheorem{corollary}{Corollary}
\newtheorem{assump}{Assumption}
\newtheorem{defn}{Definition}
\newcommand{\efflr}{\widetilde{\lr}}
\newcommand{\fedavg}{\textsc{FedAvg}\xspace}
\newcommand{\measure}{average drift at optimum\xspace}
\newcommand{\pseudograd}{\mathcal{G}}
\newcommand{\effmu}{\widetilde{\mu}}
\newcommand{\effLip}{\widetilde{L}}
\newcommand{\brackets}[1]{\left[#1\right]}
\newcommand{\parenth}[1]{\left(#1\right)}
\newcommand{\vecnorm}[1]{\left\|#1\right\|^2}
\newcommand{\norm}[1]{\left\|#1\right\|}
\newcommand{\inprod}[2]{\left<#1, #2\right>}
\newcommand{\GJ}[1]{\todo[color=green!25, inline]{ Gauri: #1} \index{Gauri: !#1}}
\newcommand{\JW}[1]{\todo[color=yellow!25, inline]{ Jianyu: #1} \index{Jianyu: !#1}}
\newcommand{\ZX}[1]{\todo[color=cyan!25, inline]{Zheng: #1} \index{Zheng: !#1}}
\newcommand{\SK}[1]{\todo[color=red!25, inline]{SK: #1} \index{SK: !#1}}
\crefname{equation}{}{}
\Crefname{equation}{}{}
\crefname{thm}{theorem}{theorems}
\Crefname{thm}{Theorem}{Theorems}
\crefname{clm}{claim}{claims}
\Crefname{clm}{Claim}{Claims}
\Crefname{coro}{Corollary}{Corollaries}
\Crefname{lem}{Lemma}{Lemmas}
\Crefname{sec}{Section}{Sections}
\crefname{app}{appendix}{appendices}
\Crefname{app}{Appendix}{Appendices}
\Crefname{part}{Part}{Parts}
\crefname{prop}{proposition}{propositions}
\Crefname{prop}{Proposition}{Propositions}
\Crefname{propty}{Property}{Properties}
\crefname{figure}{fig.}{figures}
\Crefname{figure}{Figure}{Figures}
\crefname{defn}{definition}{definitions}
\Crefname{defn}{Definition}{Definitions}
\crefname{fact}{fact}{facts}
\Crefname{fact}{Fact}{Facts}
\crefname{appendix}{appendix}{appendices}
\Crefname{appendix}{Appendix}{Appendices}
\crefname{algo}{algorithm}{algorithms}
\Crefname{algo}{Algorithm}{Algorithms}
\crefname{algorithm}{algorithm}{algorithms}
\Crefname{algorithm}{Algorithm}{Algorithms}
\crefname{conj}{conjecture}{conjectures}
\Crefname{conj}{Conjecture}{Conjectures}
\crefname{obs}{observation}{observations}
\Crefname{obs}{Observation}{Observations}
\crefname{assump}{assumption}{assumptions}
\Crefname{assump}{Assumption}{Assumptions}
\crefname{rem}{remark}{remarks}
\Crefname{rem}{Remark}{Remarks}
\title{On the Unreasonable Effectiveness of Federated Averaging with Heterogeneous Data}
\author{%
  Jianyu Wang \\
  Carnegie Mellon University\\
   \And
   Rudrajit Das \\
   University of Texas at Austin \\
   \And
   Gauri Joshi \\
   Carnegie Mellon University \\
   \And
   Satyen Kale \\
   Google Research \\
   \And
   Zheng Xu \\
   Google Research \\
   \And
   Tong Zhang \\
   Google Research and HKUST \\
}
\begin{document}

\maketitle

\begin{abstract}
Existing theory predicts that data heterogeneity will degrade the performance of the Federated Averaging (FedAvg) algorithm in federated learning. However, in practice, the simple FedAvg algorithm converges very well. This paper explains the seemingly unreasonable effectiveness of FedAvg that contradicts the previous theoretical predictions. We find that the key assumption of bounded gradient dissimilarity in previous theoretical analyses is too pessimistic to characterize data heterogeneity in practical applications. For a simple quadratic problem, we demonstrate there exist regimes where large gradient dissimilarity does not have any negative impact on the convergence of FedAvg. Motivated by this observation, we propose a new quantity \emph{average drift at optimum} to measure the effects of data heterogeneity, and explicitly use it to present a new theoretical analysis of FedAvg. We show that the average drift at optimum is nearly zero across many real-world federated training tasks, whereas the gradient dissimilarity can be large. And our new analysis suggests FedAvg can have \emph{identical} convergence rates in homogeneous and heterogeneous data settings, and hence, leads to better understanding of its empirical success. 
\end{abstract}

\section{Introduction}
Federated learning (FL) is an emerging distributed training paradigm~\cite{kairouz2019advances,wang2021field}, which enables a large number of clients to collaboratively train a powerful machine learning model without the need of uploading any raw training data. One of the most popular FL algorithms is Federated Averaging (\fedavg), proposed by \cite{mcmahan2016communication}. In each round, a small subset of clients are randomly selected to perform local model training. Then, the local model changes from clients are aggregated at the server to update the global model. This general local-update framework only requires infrequent communication between server and clients, and thus, is especially suitable for FL settings where the communication cost is a major bottleneck. 

Due to its simplicity and empirical effectiveness, \fedavg has become the basis of almost all subsequent FL optimization algorithms. Nonetheless, its convergence behavior, especially when clients have heterogeneous data, has not been fully understood yet. Existing theoretical results such as \cite{woodworth2020minibatch,glasgow2021sharp} predict that \fedavg's convergence is greatly affected by the data heterogeneity. When the local gradients on clients become different from each other (i.e., more data heterogeneity), \fedavg may require much more communication rounds to converge. These theoretical predictions match well with the observations on pathological datasets with artificially partitioned or synthetic data~\cite{hsu2019measuring,li2018federated}. However, on many real-world FL training tasks, \fedavg actually performs extremely well~\cite{mcmahan2016communication,charles2021large}, which appears to be unreasonable based on the existing theory. In fact, many advanced methods aimed at mitigating the negative effects of data heterogeneity performs similar to \fedavg in real-world FL training tasks. For example, \textsc{Scaffold}~\cite{karimireddy2019scaffold} needs much fewer communication rounds to converge than \fedavg in theory and when run on a synthetic dataset constructed to have large heterogeneity. However, \textsc{Scaffold} and \fedavg have roughly identical performance on many realistic federated datasets, see the experiments in \cite{reddi2020adaptive}. Thus, \emph{the negative theoretical results about \fedavg are mysteriously inconsistent with practical observations}.

\paragraph{Contributions.}
The significant gap between theory and practice motivates us to ask whether existing analyses of \fedavg are too pessimistic for practical applications. In particular, most previous works used the average difference between local gradients and the global gradient (i.e., gradient dissimilarity) as a measure of the influence of data heterogeneity. When gradient dissimilarity is zero, all clients have identical local objective functions, and hence, there is no heterogeneity. When gradient dissimilarity increases, the resulting error bound will become worse. However, in practice, it is possible that data heterogeneity yields large gradient dissimilarity but only has little influence on the actual convergence.

To illustrate this issue, in \Cref{sec:mismatch}, we consider a quadratic problem based on a statistical model, where all clients share the same labeling process but have different input distributions. In this simple problem, data heterogeneity does not adversely affect the convergence of \fedavg at all, though clients may have arbitrarily large gradient dissimilarity. This suggests that there may exist a significant mismatch between the level of gradient dissimilarity and the effect of data heterogeneity in the convergence of FedAvg. 

In order to capture the effects of data heterogeneity more accurately, we propose a new measure in \Cref{sec:proposed_measure}: \emph{\measure}, which is defined as the average (over clients) of the change in the local model after taking multiple local descent steps starting from the optimal point. In the quadratic problem of \Cref{sec:mismatch}, the \measure goes to zero almost surely as the number of clients goes to infinity, even if the gradient dissimilarity or the number of local steps is large. Similarly, we empirically show that on many realistic federated datasets, the \measure is very small and close to zero, while the upper bounds based on gradient dissimilarity are much larger. These observations explain the discrepancy between the existing theory and practice: due to a small (or roughly zero) \measure, data heterogeneity has very limited negative impact in practical applications but the effect on FedAvg is exaggerated in the existing analysis using gradient dissimilarity.
 
Moreover, in \Cref{sec:main_results}, we provide a new theoretical analysis of FedAvg for strongly convex loss functions, which explicitly utilizes \measure.
In the special case when \measure is zero, we prove that \fedavg has \emph{identical} convergence rate in homogeneous and heterogeneous data settings. As a consequence, one concludes that \fedavg is provably better than mini-batch SGD in regimes relevant to practical applications, despite strong data heterogeneity. Our analysis introduces some new proof techniques that extend the ideas in ~\cite{charles2021convergence,charles2022iterated,malinovskiy2020local}. These new proof techniques can be of independent interests.  

We would like to point out that our results provide alternative views that complement the existing literature. While it is true that on pathological datasets, larger data heterogeneity corresponds to larger gradient dissimilarity as well as a worse convergence, the key conclusion of this work is that there are many realistic datasets where data heterogeneity has little or even no negative impact on the convergence of FedAvg. As long as the \measure is small, \fedavg can be a reasonable algorithm to train a global model for all clients. If \measure is very large, then users may consider training personalized models instead of designing more sophisticated optimization procedures with global convergence guarantees. 

\section{Preliminaries and Related Work}
\paragraph{Problem Formulation.}
We consider total $M$ clients, where each client $c$ has a local objective function $F_c(\vw)$ defined on its local dataset $\gD_c$. The goal of FL training is to minimize a global objective function, defined as a weighted average over all clients:
\begin{align}
    F(\vw) = \sum_{c = 1}^M p_c F_c(\vw) =\E_c [F_c(\vw)] , \label{eqn:global_obj}
\end{align}
where $p_c$ is the relative weight for client $c$. For the ease of writing, in the rest of this paper, we will use $\E_c[\va_c] = \sum_{c=1}^M p_c \va_c$ to represent the weighted average over clients. In this paper, unless otherwise stated, we mainly focus on the setting where each local objective function is $L$-Lipschitz smooth, and $\mu$-strongly convex. Formally, there exists constants $L \geq \mu > 0$ such that
\begin{align}
    \mu \vecnorm{\vw - \vu} \leq \vecnorm{\nabla F_c(\vw) - \nabla F_c(\vu)} \leq L \vecnorm{\vw - \vu}.
\end{align}

\paragraph{Update Rule of \fedavg.}
\fedavg~\cite{mcmahan2016communication} is a popular algorithm to minimize \cref{eqn:global_obj} without the need of uploading raw training data. In each round of \fedavg, client $c$ performs $H$ steps of SGD from a global model $\vw$ to a local model $\vw^{(H)}_c$ with a local learning rate $\eta$. Then, at the server, the local model changes are aggregated to update the global model as follows:
\begin{align}
    \vw^{(t+1)} = \vw^{(t)} - \lr\E_c[\vw^{(t)} - \vw_c^{(t,H)}]. \label{eqn:update_rule_fedavg}
\end{align}
Here $\lr$ denotes the server learning rate, and superscript $t$ denotes the index of communication round. Unless otherwise stated, we assume that all clients participate into training at each round. 

\paragraph{Theoretical Analysis of \fedavg.}
When clients have homogeneous data, many works provided error upper bounds to guarantee the convergence of \fedavg (also called Local SGD) \cite{stich2018local,yu2018parallel,wang2018cooperative,zhou2017convergence,khaled2020tighter,li2019convergence}. In these papers, \fedavg was treated as a method to reduce the communication cost in distributed training. It has been shown that in the stochastic setting, using a proper $H>1$ in \fedavg will not negatively influence the dominant convergence rate. Hence \fedavg can save communication rounds compared to the algorithm where $H=1$. Later in \cite{woodworth2020local}, the authors compared \fedavg with the mini-batch SGD baseline, and showed that in certain regimes, \fedavg provably improves over mini-batch SGD. These upper bounds on \fedavg was later proved by \cite{glasgow2021sharp} to be tight and not improvable for general convex functions.

When clients have heterogeneous data, in order to analyze the convergence of \fedavg, it is common to make the following assumption to bound the difference among local gradients.
\begin{assump}[Gradient Dissimilarity]
There exists a positive constant $\zeta$ such that, $\forall \vw \in \R^d$,
\begin{align}
    \E_c\vecnorm{\nabla F_c(\vw) - \nabla F(\vw)} \leq \zeta^2. \label{eqn:grad_dissimilarity}
\end{align}
\end{assump}
This assumption first appeared in decentralized optimization literature~\cite{lian2017can,yuan2016convergence,assran2018stochastic}, and has been subsequently used in the analysis of \fedavg  \cite{yu2019linear,koloskova2020unified,khaled2020tighter,karimireddy2019scaffold,reddi2020adaptive,wang2020tackling,Wang2020SlowMo,haddadpour2019convergence}, since \fedavg can be considered as a special case of decentralized optimization algorithms~\cite{wang2018cooperative}.  Under the gradient dissimilarity assumption, \fedavg cannot outperform the simple mini-batch SGD baseline unless $\zeta$ is extremely small ($\zeta < 1/T$ where $T$ is the total communication rounds)~\cite{woodworth2020minibatch}; the deterministic version of \fedavg (i.e., Local GD) has even slower convergence rate than vanilla GD~\cite{khaled2020tighter}. Again, these upper bounds match a lower bound constructed in \cite{glasgow2021sharp} for general convex functions, suggesting that they are tight in the worst case. In this paper, we do not aim to improve these bounds, which are already tight. Instead, we argue that since the existing analyses only consider the worst case, they may be too pessimistic for practical applications.

Finally, we note that there is another line of works~\cite{malinovskiy2020local,charles2021convergence,charles2022iterated} using a different analysis technique from the above literature. They showed that \fedavg is equivalent to performing gradient descent on a surrogate loss function. However, so far this technique still has many limitations. It can only be applied to deterministic settings with quadratic (or a very special class) loss functions.

\section{Mismatch Between Gradient Dissimilarity and the Influence of Data Heterogeneity}\label{sec:mismatch}
In this section, we show that there is a significant mismatch between the gradient dissimilarity and the influence of data heterogeneity. We first review existing analysis techniques and discuss where the mismatch arises. Then, we consider a simple quadratic problem based on a realistic statistical model, where data heterogeneity does not affect the convergence rate even if the gradient dissimilarity is arbitrarily large. For the ease of discussion, in this section, we constrain ourselves to the deterministic setting where clients perform local GD updates instead of local SGD in each round.

\subsection{Overview of Existing Analysis Techniques based on Gradient Dissimilarity}\label{sec:overview_prev}

We first notice that most of previous works treat \fedavg as a biased mini-batch SGD algorithm, such as \cite{wang2020tackling,reddi2020adaptive,yang2020achieving,karimireddy2019scaffold}. In these theoretical works, the actual descent direction in \fedavg is defined as a pseudo-gradient as follows
\begin{align}
	\text{Pseudo-Gradient:} \quad \pseudograd_c(\vw) \triangleq \frac{1}{\eta H} (\vw - \vw_c^{(H)}) \label{eq:quadratic-pseudo-grad}
\end{align}
where $\vw_c^{(H)}$ denotes the locally trained model after performing $H$ steps of GD from $\vw$ using a local learning rate $\eta$. As a consequence, the update rule~\Cref{eqn:update_rule_fedavg} of \fedavg can be rewritten as follows:
\begin{align}
    \vw^{(t+1)} = \vw^{(t)} - \lr \eta H \nabla F(\vw^{(t)}) + \lr\eta H \E_c[B_c(\vw^{(t)})] \label{eqn:biased_sgd}
\end{align}
where $B_c(\vw^{(t)})$ denotes the gradient bias at client $c$, which is formally defined as
\begin{align}
    B_c(\vw^{(t)}) \triangleq \nabla F_c(\vw^{(t)}) - \pseudograd_c(\vw^{(t)}) = \frac{1}{H}\sum_{h=0}^{H-1}\parenth{\nabla F_c(\vw^{(t)}) - \nabla F_c(\vw_c^{(t,h)})}. \label{eqn:grad_bias}
\end{align}
Recall that $\vw_c^{(t,h)}$ represents the local model on client $c$ after performing $h$ steps of local updates from $\vw^{(t)}$. Using standard techniques, one can prove the following lemma for deterministic \fedavg.
\begin{lem}\label{lem:biased_gd}
Suppose \fedavg starts model training from $\vw^{(0)}$ and each local objective function is $L$-Lipschitz smooth and $\mu$-strongly convex. When the learning rates satisfy $\lr\eta H L \leq 1$, after total $T$ communication rounds, we have
\begin{align}
    F(\vw^{(T)}) - F(\vw^{*})
    \leq \parenth{1 - \lr\eta H \mu}^T(F(\vw^{(0)}) - F(\vw^*)) + \frac{1}{2\mu T}\sum_{t=0}^{T-1}\vecnorm{\E_c B_c(\vw^{(t)})} \label{eqn:previous_analysis_bnd}
\end{align}
where $\vw^* = \argmin_{\vw} F(\vw)$ denotes the optimum of the global objective function.
\end{lem}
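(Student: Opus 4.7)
The plan is to view the FedAvg update~\Cref{eqn:biased_sgd} as a biased gradient descent procedure on the global objective $F$ with effective step size $\efflr := \lr\eta H$ and bias direction $\E_c B_c(\vw^{(t)})$. Observe first that since each $F_c$ is $L$-smooth and $\mu$-strongly convex, the convex combination $F(\vw)=\E_c[F_c(\vw)]$ inherits both properties. This reduces the problem to analyzing a single deterministic biased-GD iteration $\vw^{(t+1)}=\vw^{(t)}-\efflr(\nabla F(\vw^{(t)})-\E_c B_c(\vw^{(t)}))$ on an $L$-smooth, $\mu$-strongly convex objective.

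First I would apply the standard descent lemma for $L$-smooth functions and the hypothesis $\efflr L \le 1$ to get
\begin{align*}
F(\vw^{(t+1)}) \le F(\vw^{(t)}) + \inprod{\nabla F(\vw^{(t)})}{\vw^{(t+1)}-\vw^{(t)}} + \tfrac{1}{2\efflr}\vecnorm{\vw^{(t+1)}-\vw^{(t)}}.
\end{align*}
Substituting the update and completing the square in the variables $\nabla F(\vw^{(t)})$ and $\E_c B_c(\vw^{(t)})$ should collapse the cross terms, yielding the clean one-step inequality
\begin{align*}
F(\vw^{(t+1)}) - F(\vw^*) \le F(\vw^{(t)}) - F(\vw^*) - \tfrac{\efflr}{2}\vecnorm{\nabla F(\vw^{(t)})} + \tfrac{\efflr}{2}\vecnorm{\E_c B_c(\vw^{(t)})}.
\end{align*}
Next I would invoke the PL inequality (implied by $\mu$-strong convexity of $F$), namely $\vecnorm{\nabla F(\vw^{(t)})}\ge 2\mu(F(\vw^{(t)})-F(\vw^*))$, to turn the gradient-norm term into a contraction, giving the geometric recursion
\begin{align*}
F(\vw^{(t+1)}) - F(\vw^*) \le (1-\efflr\mu)\bigl(F(\vw^{(t)})-F(\vw^*)\bigr) + \tfrac{\efflr}{2}\vecnorm{\E_c B_c(\vw^{(t)})}.
\end{align*}

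Finally I would unroll this recursion over $t=0,\dots,T-1$, which produces the first (exponentially contracting) term $(1-\efflr\mu)^T(F(\vw^{(0)})-F(\vw^*))$ together with an exponentially weighted sum $\tfrac{\efflr}{2}\sum_{t=0}^{T-1}(1-\efflr\mu)^{T-1-t}\vecnorm{\E_c B_c(\vw^{(t)})}$ of the gradient biases. The main obstacle is matching the stated Cesaro form $\tfrac{1}{2\mu T}\sum_t\vecnorm{\E_c B_c(\vw^{(t)})}$: the most direct bound $(1-\efflr\mu)^{T-1-t}\le 1$ gives $\tfrac{\efflr}{2}\sum_t\vecnorm{\E_c B_c(\vw^{(t)})}$, while dropping $\efflr$ via the geometric sum $\sum_{t}(1-\efflr\mu)^{T-1-t}\le 1/(\efflr\mu)$ gives $\tfrac{1}{2\mu}\max_t\vecnorm{\E_c B_c(\vw^{(t)})}$. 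Reaching the precise $\tfrac{1}{2\mu T}\sum_t$ form therefore likely requires an additional ingredient, such as choosing $\efflr\le 1/(\mu T)$, performing weighted iterate averaging $\bar\vw=\tfrac{1}{T}\sum_t\vw^{(t)}$ and using Jensen together with telescoping on $\vecnorm{\vw^{(t)}-\vw^*}$, or a refined summation argument that absorbs the geometric weights into the Cesaro average; identifying which of these the authors invoke is the key step.
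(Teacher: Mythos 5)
Your proposal follows essentially the same route as the paper's proof: the descent lemma plus completing the square (using $\lr\eta H L\le 1$ to discard the $-\frac{\lr\eta H}{2}(1-\lr\eta H L)\vecnorm{\pseudograd(\vw)}$ term) gives exactly the paper's one-step inequality $F(\vw^{(t+1)})-F(\vw^*)\le(1-\lr\eta H\mu)(F(\vw^{(t)})-F(\vw^*))+\frac{\lr\eta H}{2}\vecnorm{\E_c B_c(\vw^{(t)})}$, followed by the PL consequence of strong convexity and unrolling. The final-step difficulty you flag is genuine but is not an omission on your part: the paper simply asserts the $\frac{1}{2\mu T}\sum_{t}$ form immediately after unrolling, whereas the recursion only yields $\frac{\lr\eta H}{2}\sum_{t=0}^{T-1}(1-\lr\eta H\mu)^{T-1-t}\vecnorm{\E_c B_c(\vw^{(t)})}$, which matches the stated Ces\`aro form only under an extra condition such as $\lr\eta H\le 1/(\mu T)$ (or after replacing it by the cruder bound $\frac{1}{2\mu}\max_t\vecnorm{\E_c B_c(\vw^{(t)})}$), so your identification of the missing ingredient is in fact a sharper reading than the paper's own write-up.
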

\Cref{lem:biased_gd} shows that the convergence of \fedavg critically relies on the norm of the average gradient bias. So, one way to ensure the convergence is to to provide a uniform upper bound on this gradient bias term. To achieve this, previous works introduce the gradient dissimilarity assumption in the form of \Cref{eqn:grad_dissimilarity} to bound the second term in \Cref{eqn:previous_analysis_bnd} as follows:
\begin{align}
    \vecnorm{\E_c B_c(\vw^{(t)})}
    \leq& \E_c\vecnorm{B_c(\vw^{(t)})} \label{eqn:loose_bnd1} \\
    \leq& \beta \vecnorm{\nabla F(\vw^{(t)})} + \gamma \eta^2 L^2 H^2 \zeta^2 \label{eqn:loose_bnd2}
\end{align}
where $\beta < 1, \gamma$ are constants, and recall that $\zeta$ measures gradient dissimilarity. \Cref{eqn:loose_bnd1} comes from Jensen inequality, and \Cref{eqn:loose_bnd2} is due to lemmas in previous works~\cite{wang2020tackling,yang2020achieving}. After substituting \Cref{eqn:loose_bnd2} into \Cref{eqn:previous_analysis_bnd} and optimizing the learning rates, one can obtain a convergence rate $\gO(T^{-2})$ for \fedavg for strongly convex functions, which is substantially slower than vanilla GD's linear rate $\gO(\exp(-T))$.

\subsection{Motivating Example: Arbitrarily Large Gradient Dissimilarity but No Negative Impact}\label{sec:toy_example}

We argue that the upper bounds \Cref{eqn:loose_bnd1,eqn:loose_bnd2} might be too pessimistic for practical applications. Particularly, upper bound \cref{eqn:loose_bnd1} omits the correlations (i.e., covariance) across clients. It is possible that the average gradient bias is very small while their average $\ell_2$-norm is large. Furthermore, upper bound \Cref{eqn:loose_bnd2} suggests that the gradient bias quadratically increases with $H$. However, this has not been validated in practice. There may exist certain scenarios where the gradient bias increases slowly with $H$ or even does not depend on $H$.

Motivated by the above intuition, we construct a synthetic problem where a single global model can work reasonably well for all clients though they have heterogeneous data. We assume all clients have the same conditional probability $p_c(y|\vx)=p(y|\vx), \forall c$, where $\vx,y$ denote the input data and its label, respectively. In this case, clients still have heterogeneous data distributions, as they may have drastically different $p_c(\vx)$ and $p_c(\vx, y)$. However, clients contributions should not conflict with each other, as the learning algorithms tend to learn the same $p(y|\vx)$ on all clients.

Now let us study a concrete example. We assume that the the model is linear and the label of the $i$-th data sample on client $c$ is generated as follows:
\begin{align}
    y_{c,i} = \inprod{\vw^*}{\vx_{c,i}} + \epsilon_{c,i}
    \label{eqn:stats_model}
\end{align}
where $\vw^* \in \R^d$ denotes the optimal model, and $\epsilon_{c,i} \sim \mathcal{P}_\epsilon$ is a zero-mean random noise and independent from $\vx_{c,i}$ (this is a common assumption in statistical learning). We also assume that all $\|\vx_{c,i}\|$ and $\epsilon_{c,i}$ have bounded variance. 
Both $\vw^*$ and $\mathcal{P}_\epsilon$ are the same on all clients. That is, clients share the same label generation process (i.e., same conditional probability $p(y|\vx)$). Our goal is to find the optimal model $\vw^*$ given a large amount of clients with \emph{finite} data samples (common cross-device FL setting~\cite{kairouz2019advances}). One can define a quadratic loss function for each client as follows:
\begin{align}
    F_c(\vw) = \frac{1}{n}\sum_{i=1}^{n} \frac{1}{2}\parenth{y_{c,i} - \vw^\top \vx_{c,i}}^2 = \frac{1}{2}(\vw-\vw^*)\mA_c(\vw-\vw^*) - \vb_c^\top(\vw-\vw^*) + \text{const.}
\end{align}
where $\mA_c = \sum_{i=1}^{n} \vx_{c,i}\vx_{c,i}^\top/n$, $\vb_c = \sum_{i=1}^{n} \epsilon_{c,i}\vx_{c,i}/n$. The minimizer of local objective $F_c(\vw)$ is $\vw_c^* = \vw^* + \mA_c^{-1}\vb_c$, which is different from the global minimizer $\vw^*$ as $\vb_c \neq 0$.

\paragraph{Problems in Existing Analyses.}
Now let us check the gradient dissimilarity in this synthetic problem. At the optimal point $\vw^*$, according to the definition~\Cref{eqn:grad_dissimilarity}, we have 
\begin{align}
    \E_c\vecnorm{\nabla F_c(\vw^*) - \nabla F(\vw^*)} = \E_c\vecnorm{\nabla F_c(\vw^*)} = \E_c\vecnorm{\vb_c}. \label{eqn:quad_grad_dis}
\end{align}
Observe that $\epsilon$ can have extremely large variance such that the gradient dissimilarity bound $\zeta$ is arbitrarily large. As a result, existing analyses, which rely on the bounded gradient dissimilarity, may predict that \fedavg is much worse than its non-local counterparts. However, by simple manipulations on the update rule of \fedavg, one can easily prove the following theorem.
\begin{thm}\label{thm:quadratic_analysis}
Suppose that the weighting of the clients is uniform, and each client has a small finite amount of data. Under the problem setting of \eqref{eqn:stats_model}, the iterates of Local GD (i.e., deterministic version of \fedavg) satisfies the following equation almost surely as the number of clients goes to infinity:
\begin{align}
    \vw^{(T)} - \vw^* = \brackets{\E_c\brackets{\parenth{\mI - \eta\mA_c}^H}}^{T}(\vw^{(0)} - \vw^*).
\end{align}
\end{thm}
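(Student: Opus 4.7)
\textbf{Proof sketch for \Cref{thm:quadratic_analysis}.} The plan is to exploit the quadratic structure to unroll local GD exactly, then use a law-of-large-numbers argument across clients to kill the only obstructing term. The quadratic loss makes the local gradient affine, $\nabla F_c(\vw) = \mA_c(\vw - \vw^*) - \vb_c$. Setting $\vu_c^{(t,h)} \triangleq \vw_c^{(t,h)} - \vw^*$, one step of local GD reads $\vu_c^{(t,h+1)} = (\mI - \eta\mA_c)\vu_c^{(t,h)} + \eta \vb_c$, which unrolls from the common initialization $\vu_c^{(t,0)} = \vw^{(t)} - \vw^*$ to the closed form
\begin{align*}
\vu_c^{(t,H)} \;=\; (\mI-\eta\mA_c)^H(\vw^{(t)}-\vw^*) \;+\; \eta\sum_{h=0}^{H-1}(\mI-\eta\mA_c)^h \vb_c.
\end{align*}

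Plugging this into \Cref{eqn:update_rule_fedavg} with server learning rate $\lr = 1$ (so that $\vw^{(t+1)} = \E_c[\vw_c^{(t,H)}]$, which is the setting compatible with the theorem's statement) and averaging the closed form over clients yields the exact one-round recursion
\begin{align*}
\vw^{(t+1)} - \vw^* \;=\; \E_c\!\bigl[(\mI-\eta\mA_c)^H\bigr](\vw^{(t)}-\vw^*) \;+\; \eta\, \E_c\!\biggl[\sum_{h=0}^{H-1}(\mI-\eta\mA_c)^h \vb_c\biggr].
\end{align*}
If the second (noise) term vanishes almost surely as $M\to\infty$, the theorem follows by composing the linear map in the first term $T$ times.

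To kill the noise term I would use the strong law of large numbers. Under uniform weighting, $\E_c[\cdot]$ is an empirical average over $M$ i.i.d.\ client-level quantities with finite second moments (guaranteed by the bounded variances of $\|\vx_{c,i}\|$ and $\epsilon_{c,i}$ in \eqref{eqn:stats_model}), so the noise term converges a.s.\ to its population mean $\eta\sum_{h=0}^{H-1}\E[(\mI-\eta\mA_c)^h \vb_c]$. Each summand then vanishes by conditioning on the inputs $\{\vx_{c,i}\}_{i=1}^n$: the matrix $(\mI-\eta\mA_c)^h$ is a measurable function of the inputs alone, whereas $\vb_c = \tfrac{1}{n}\sum_i \epsilon_{c,i}\vx_{c,i}$ still has conditional mean zero because $\epsilon_{c,i}$ is zero-mean and independent of $\vx_{c,i}$. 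The tower property then gives $\E[(\mI-\eta\mA_c)^h \vb_c] = \E[(\mI-\eta\mA_c)^h\, \E[\vb_c \mid \vx]] = 0$ for every $h$.

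Combining the a.s.\ vanishing of the noise term with the exact one-round recursion and iterating $t = 0, 1, \ldots, T-1$ produces the claimed identity; since $T$ is a fixed finite integer, the exceptional null sets across the $T$ rounds simply union together, so no uniform-in-$t$ concentration argument is required. The main conceptual obstacle (and the reason the example is interesting at all) is recognizing that the cancellation hinges on \emph{averaging the product} $(\mI-\eta\mA_c)^h \vb_c$ across clients rather than on bounding $\|\vb_c\|$ individually: this is exactly the correlation across clients that the loose bound \Cref{eqn:loose_bnd1} discards, and it is what allows Local GD to escape the pessimistic $\zeta$-dependent rate even though the gradient dissimilarity in \Cref{eqn:quad_grad_dis} may be arbitrarily large.
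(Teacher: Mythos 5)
Your proof is correct and follows essentially the same route as the paper's: unroll the affine local GD recursion exactly, average over clients with $\lr=1$ to get the one-round linear recursion plus a bias term, and kill that bias almost surely via the strong law of large numbers combined with the conditional-mean-zero property of $\vb_c$ given the inputs. The only (cosmetic, and arguably cleaner) difference is that you keep the bias as the geometric sum $\eta\sum_{h=0}^{H-1}(\mI-\eta\mA_c)^h\vb_c$ rather than rewriting it as $(\mI-(\mI-\eta\mA_c)^H)\mA_c^{-1}\vb_c$ via the local optimum $\vw_c^*$, which avoids the implicit invertibility assumption on $\mA_c$ that the paper's version requires.
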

The proof is relegated to the Appendix. From \Cref{thm:quadratic_analysis}, it is clear that if the learning rate $\eta$ is properly set such that $(\mI - \eta\mA_c)$ is positive definite, then performing more local updates (larger $H$) will lead to faster \emph{linear} convergence rate $\gO(\exp(-T))$ to the global optimum $\vw^*$. That is, Local GD is strictly better than vanilla GD. However, previous works based on gradient dissimilarity will get a substantially slower rate of $\gO(1/T^2)$ as discussed in \Cref{sec:overview_prev}. In this example, while the gradient dissimilarity can be arbitrarily large, the data heterogeneity actually does not have any negative impacts. There is a significant mismatch between the level of gradient dissimilarity and the effects of data heterogeneity.

\GJ{Perhaps write the rate of convergence of local GD and vanilla GD here so that readers can compare and contrast with the rates mentioned at the end of Section 3.1} \ZX{+1. Or maybe just a comment on the best known FedAvg convergence so far.}
\JW{Added.}
\GJ{I also added $\gO(\exp(-T))$ above}

\section{Proposed Measure of Data Heterogeneity}\label{sec:proposed_measure}
In \Cref{sec:mismatch}, we show that there exist regimes where data heterogeneity does not have any negative impacts on the convergence of \fedavg, while the gradient dissimilarity among clients can be arbitrarily large. The bounded gradient dissimilarity assumption not only inaccurately describe the real effects of data heterogeneity but also may be violated in many scenarios. In order to address this problem, in this section, we propose a new measure of data heterogeneity and later empirically compare it with previous gradient dissimilarity on multiple real-world federated training tasks.

As we discussed in \Cref{sec:mismatch}, one problem about using the gradient dissimilarity is that it omits the possible correlations among clients. So instead of using the upper bound \Cref{eqn:loose_bnd1} which uses the expected norm, we propose to directly measure the effects of data heterogeneity through the norm of the expected value. Specifically, we use the gradient bias \Cref{eqn:grad_bias} at the optimal point $\vw^*$ to define \emph{\measure} and use it to measure data heterogeneity:
\begin{align}
	\text{\measure:} \quad \rho \triangleq \norm{\pseudograd(\vw^*) - \nabla F(\vw^*)} = \norm{\pseudograd(\vw^*)}, \label{eqn:client_consensus}
\end{align}
where $\pseudograd(\vw)=\E_{c} [\pseudograd_c(\vw)]$, and $\pseudograd_c(\vw)$ is pseudo-gradient defined in \cref{eq:quadratic-pseudo-grad}.
Using the definition of pseudo-gradient and mean value theorem, we can get the following lemma.

\begin{lem}\label{lem:average_drift}
	The \measure defined in~\Cref{eqn:client_consensus} is given by $\rho = \norm{\E_c\brackets{\mP_c \nabla F_c(\vw^*)}}$ where $\mP_c$s are polynomials of local Hessian matrices. When $F_c$ is quadratic, we have 
	\begin{align}
	    \mP_c = \frac{1}{H}\sum_{h=0}^{H-1}\brackets{\mI - (\mI - \eta \nabla^2 F_c(\vw^*))^h}.
	\end{align}
\end{lem}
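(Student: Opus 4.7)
The plan is to work directly from the definitions, rewrite $\pseudograd_c(\vw^*)$ as a sum of local gradients along the $H$-step local trajectory started at $\vw^*$, and then use a mean-value theorem to convert gradient differences into Hessian-weighted displacements. First I would telescope the local GD updates: since $\vw_c^{*,(h+1)} = \vw_c^{*,(h)} - \eta\nabla F_c(\vw_c^{*,(h)})$ with $\vw_c^{*,(0)}=\vw^*$, we get $\pseudograd_c(\vw^*) = \frac{1}{\eta H}(\vw^* - \vw_c^{*,(H)}) = \frac{1}{H}\sum_{h=0}^{H-1}\nabla F_c(\vw_c^{*,(h)})$. Adding and subtracting $\nabla F_c(\vw^*)$ and using the global optimality condition $\nabla F(\vw^*) = \E_c[\nabla F_c(\vw^*)] = 0$ gives
\begin{align*}
  \pseudograd(\vw^*) - \nabla F(\vw^*) \;=\; \E_c\brackets{\pseudograd_c(\vw^*) - \nabla F_c(\vw^*)} \;=\; \E_c\brackets{\frac{1}{H}\sum_{h=0}^{H-1}\parenth{\nabla F_c(\vw_c^{*,(h)}) - \nabla F_c(\vw^*)}}.
\end{align*}

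Next I would use the integral form of the mean value theorem to write $\nabla F_c(\vw_c^{*,(h)}) - \nabla F_c(\vw^*) = \widetilde{\mathbf{H}}_{c,h}(\vw_c^{*,(h)} - \vw^*)$, where $\widetilde{\mathbf{H}}_{c,h} = \int_0^1 \nabla^2 F_c(\vw^* + s(\vw_c^{*,(h)} - \vw^*))\,ds$. Letting $\Delta_c^{(h)} = \vw_c^{*,(h)} - \vw^*$, the local GD recursion can be rewritten as $\Delta_c^{(h+1)} = (\mI - \eta\widetilde{\mathbf{H}}_{c,h})\Delta_c^{(h)} - \eta\nabla F_c(\vw^*)$ with $\Delta_c^{(0)}=\vzero$. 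Solving this linear recursion expresses $\Delta_c^{(h)}$ as a (non-commutative) polynomial in $\eta\widetilde{\mathbf{H}}_{c,0},\dots,\eta\widetilde{\mathbf{H}}_{c,h-1}$ acting on $\nabla F_c(\vw^*)$, so $\nabla F_c(\vw_c^{*,(h)}) - \nabla F_c(\vw^*)$ is of the form $-\mP_c^{(h)}\nabla F_c(\vw^*)$ for a matrix $\mP_c^{(h)}$ built entirely from local Hessians. Setting $\mP_c = \frac{1}{H}\sum_{h=0}^{H-1}\mP_c^{(h)}$ yields $\rho = \norm{\E_c[\mP_c\nabla F_c(\vw^*)]}$, as claimed.

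Finally I would specialize to the quadratic case, where $\widetilde{\mathbf{H}}_{c,h} = \nabla^2 F_c(\vw^*)$ for every $h$, so all these Hessians collapse to a single matrix that commutes with itself. The recursion unrolls cleanly via a geometric sum: $\Delta_c^{(h)} = -\eta\sum_{j=0}^{h-1}(\mI - \eta\nabla^2 F_c(\vw^*))^{j}\nabla F_c(\vw^*)$, and multiplying by $\eta\nabla^2 F_c(\vw^*)$ on the left gives $\nabla F_c(\vw_c^{*,(h)}) - \nabla F_c(\vw^*) = -\brackets{\mI - (\mI - \eta\nabla^2 F_c(\vw^*))^h}\nabla F_c(\vw^*)$. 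Averaging over $h$ recovers the stated closed form for $\mP_c$.

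The only real obstacle is keeping the general (non-quadratic) case conceptually clean: the mean Hessians $\widetilde{\mathbf{H}}_{c,h}$ at different local steps need not commute, so ``polynomial in local Hessians'' must be interpreted as a sum of ordered non-commutative products rather than a power series. In the quadratic specialization everything commutes and collapses to the neat geometric expression, but for the general statement the cleanest route is to leave $\mP_c$ as the operator obtained by unrolling the linear recursion and simply note that each of its terms is a product of matrices of the form $\eta\widetilde{\mathbf{H}}_{c,k}$ and $(\mI - \eta\widetilde{\mathbf{H}}_{c,k})$.
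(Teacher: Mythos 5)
Your proposal is correct and reaches the stated formula for $\mP_c$ by the same basic mechanism as the paper (telescope the $H$ local GD steps so that $\pseudograd_c(\vw^*)$ is the average of the gradients along the local trajectory, then apply the integral mean value theorem and unroll a linear recursion). The one genuine difference is where you apply the mean value theorem. The paper applies it between \emph{successive} iterates $\vw_c^{(*,h)}$ and $\vw_c^{(*,h+1)}$, which yields a \emph{homogeneous} recursion directly on the gradients, $\nabla F_c(\vw_c^{(*,h+1)}) = (\mI - \eta\mB_c^{(*,h)})\nabla F_c(\vw_c^{(*,h)})$, so the gradient at step $h$ is immediately an ordered product of matrices applied to $\nabla F_c(\vw^*)$ and $\mP_c$ pops out as an average of such products. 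You instead anchor the mean value theorem at $\vw^*$, which gives an \emph{inhomogeneous} recursion on the displacements $\Delta_c^{(h)}$ with forcing term $-\eta\nabla F_c(\vw^*)$; solving it and multiplying back by the anchored mean Hessian recovers the same structure. Both are valid and both collapse to the identical geometric-sum expression in the quadratic case (via $\eta\mA_c\sum_{j=0}^{h-1}(\mI-\eta\mA_c)^j = \mI-(\mI-\eta\mA_c)^h$); the paper's gradient recursion is marginally cleaner because there is no forcing term to sum over, while your version makes the dependence on $\nabla F_c(\vw^*)$ explicit from the start. One cosmetic slip: in the quadratic step you say you multiply $\Delta_c^{(h)}$ by $\eta\nabla^2 F_c(\vw^*)$, but the gradient difference is $\nabla^2 F_c(\vw^*)\Delta_c^{(h)}$ (the factor $\eta$ is already inside $\Delta_c^{(h)}$); the displayed identity you write afterwards is nonetheless correct.
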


Now, we are ready to compare the \measure and previous gradient dissimilarity in different settings. 

(1) Suppose all local objectives are identical, then $\nabla F_c(\vw^*)=\nabla F(\vw^*)=0$ and hence, \measure $\rho=0$. In this case, the gradient dissimilarity is zero as well.

(2) Suppose local objective functions are quadratic functions and have the same Hessian. In this case, for some matrix $\mP$, we have $\mP_c=\mP$ for all $c$, and $\rho=\|\E_c[\mP_c \nabla F_c(\vw^*)]\| = \|\mP \E_c[\nabla F_c(\vw^*)]\| = 0$. However, the gradient dissimilarity can be arbitrarily large.

(3) In our constructed problem in \Cref{sec:toy_example}, the \measure can be simplified to $\rho=\norm{\E_c[\mP_c \vb_c]}$. Due to the independence of $\epsilon_{c,i}$ and $\vx_{c,i}$, one can prove that for any choice of $H$, $\rho \rightarrow 0$ almost surely when the number of clients $M \to \infty$\footnote{When there are finite clients, we prove in the appendix that with high probability, \measure is $\rho=\gO(1/\sqrt{n M})$ while gradient dissimilarity is $\gO(1/\sqrt{n})$.}. This conclusion of zero \measure matches well with the linear convergence rate in \Cref{thm:quadratic_analysis}, which suggests that data heterogeneity has no negative impacts. However, the gradient dissimilarity can be arbitrarily large.

In other more general cases, it may not be always true that $\rho=0$ but we empirically show in \Cref{subsec:empirical} that in many practical applications, the \measure can be very small. We postulate that this explains why \fedavg performs unreasonably well in practice.

\subsection{Empirical Validation} \label{subsec:empirical}
In this subsection, we empirically show that our proposed measure of data heterogeneity, \measure, remains as a small value across multiple practical training tasks, suggesting that the effects of data heterogeneity can be very limited on these tasks. In contrast, both upper bounds \Cref{eqn:loose_bnd1,eqn:loose_bnd2} for the previous gradient similarity assumption are loose in practice, and hence, the resulting error bound underestimates the capacity of \fedavg.

In \Cref{fig:exp_loose_bounds2}, we first run mini-batch SGD on Federated EMNIST (FEMNIST)~\cite{mcmahan2016communication} and StackOverflow Next Word Prediciton datasets~\cite{reddi2019convergence} to obtain an approximation for the optimal model $\vw^*$. Then, we evaluate the \measure $\rho^2=\|\E_cB_c(\vw^*)\|^2$ and its upper bound $\E_c\|B_c(\vw^*)\|^2$ in  \Cref{eqn:loose_bnd1} on these datasets. If previous upper bounds based on the gradient dissimilarity are tight, then  $\rho^2$ and $\E_c\|B_c(\vw^*)\|^2$ should be close to each other and increase quadratically with the number of local steps $H$. However, it can be observed from \Cref{fig:exp_loose_bounds2a,fig:exp_loose_bounds2b}, the \measure  $\rho^2$ (red lines) nearly remains as zero and its upper bound (blue lines) slowly become larger when the number of local steps $H$ increases, suggesting previous upper bounds are loose. In addition, we construct a synthetic dataset based on our proposed statistical model~\Cref{eqn:stats_model}. As shown in \Cref{fig:exp_loose_bounds2c}, when the level of gradient dissimilarity changes, the upper bound \Cref{eqn:loose_bnd1} (blue lines) also experiences drastically change. Nonetheless, the \measure (red lines) stays as constant and is very close to zero. When clients perform stochastic local updates, we also get similar observations on StackOverflow, as shown in \Cref{fig:exp_loose_bounds2d}.

\begin{figure}[tb]
    \centering
    \begin{subfigure}{.33\textwidth}
    \includegraphics[width=\textwidth]{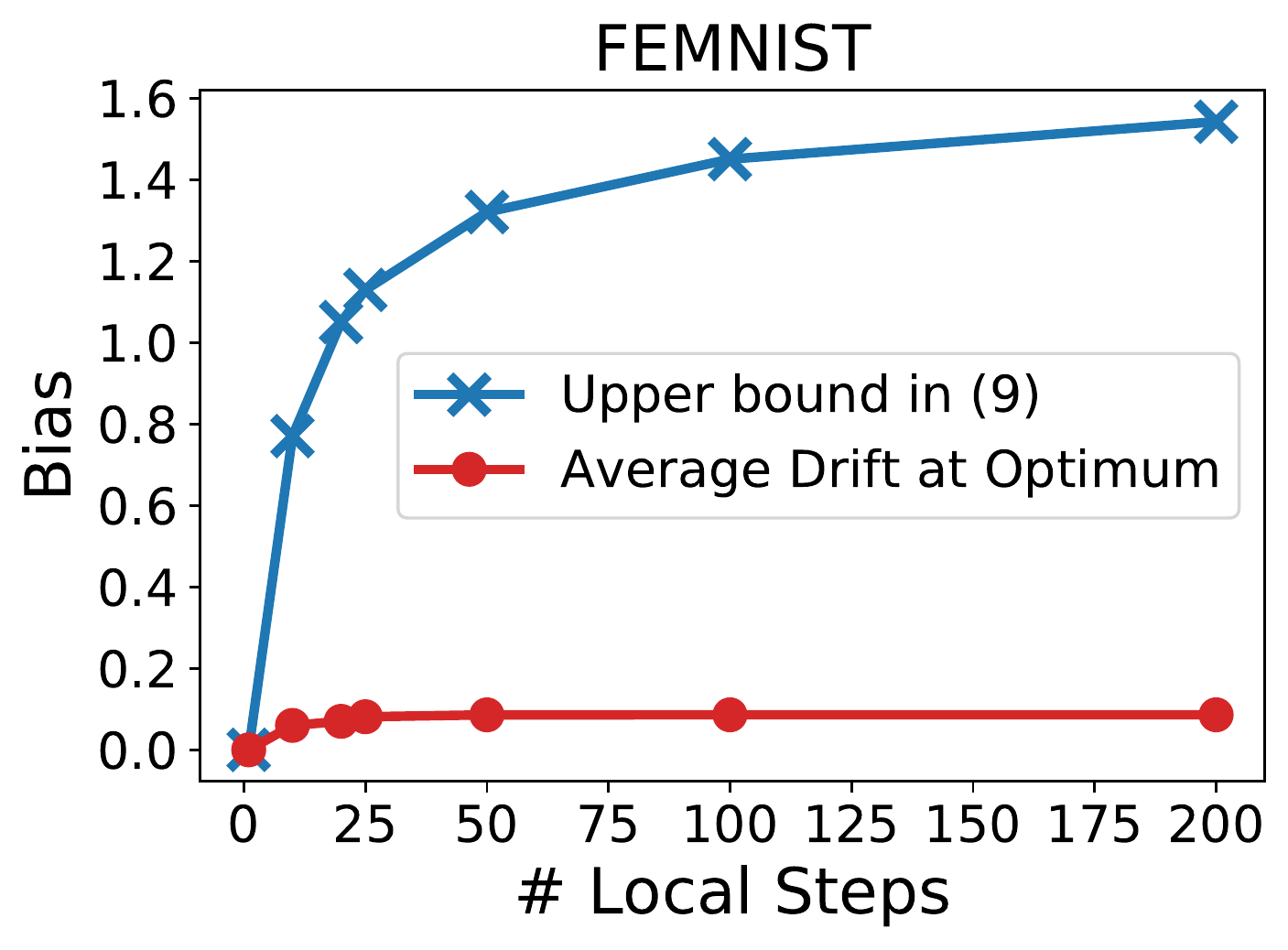}
    \caption{FEMNIST.}
    \label{fig:exp_loose_bounds2a}
    \end{subfigure}%
    ~
    \begin{subfigure}{.33\textwidth}
    \includegraphics[width=\textwidth]{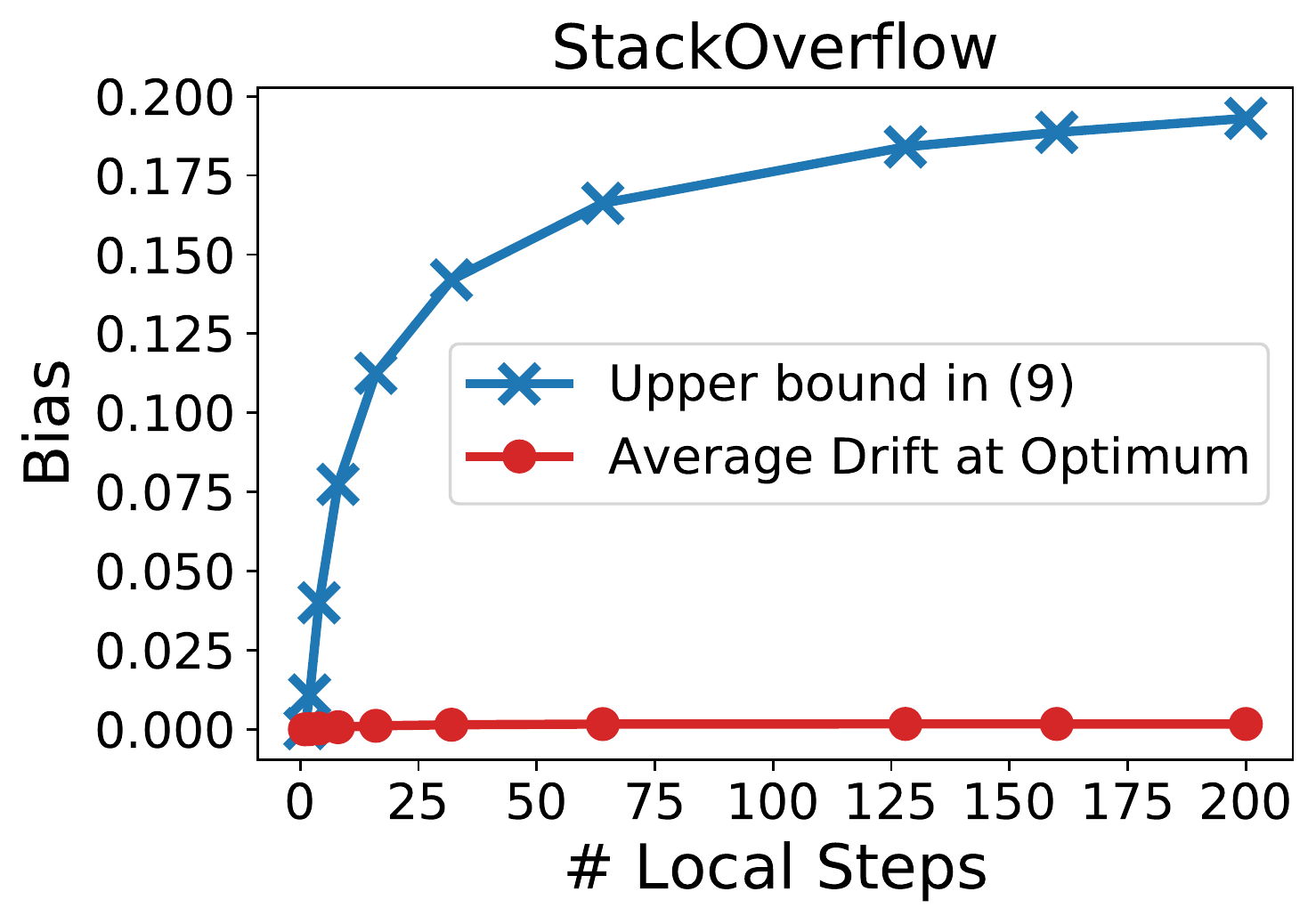}
    \caption{StackOverflow.}
    \label{fig:exp_loose_bounds2b}
    \end{subfigure}%
    
    \begin{subfigure}{.33\textwidth}
    \includegraphics[width=\textwidth]{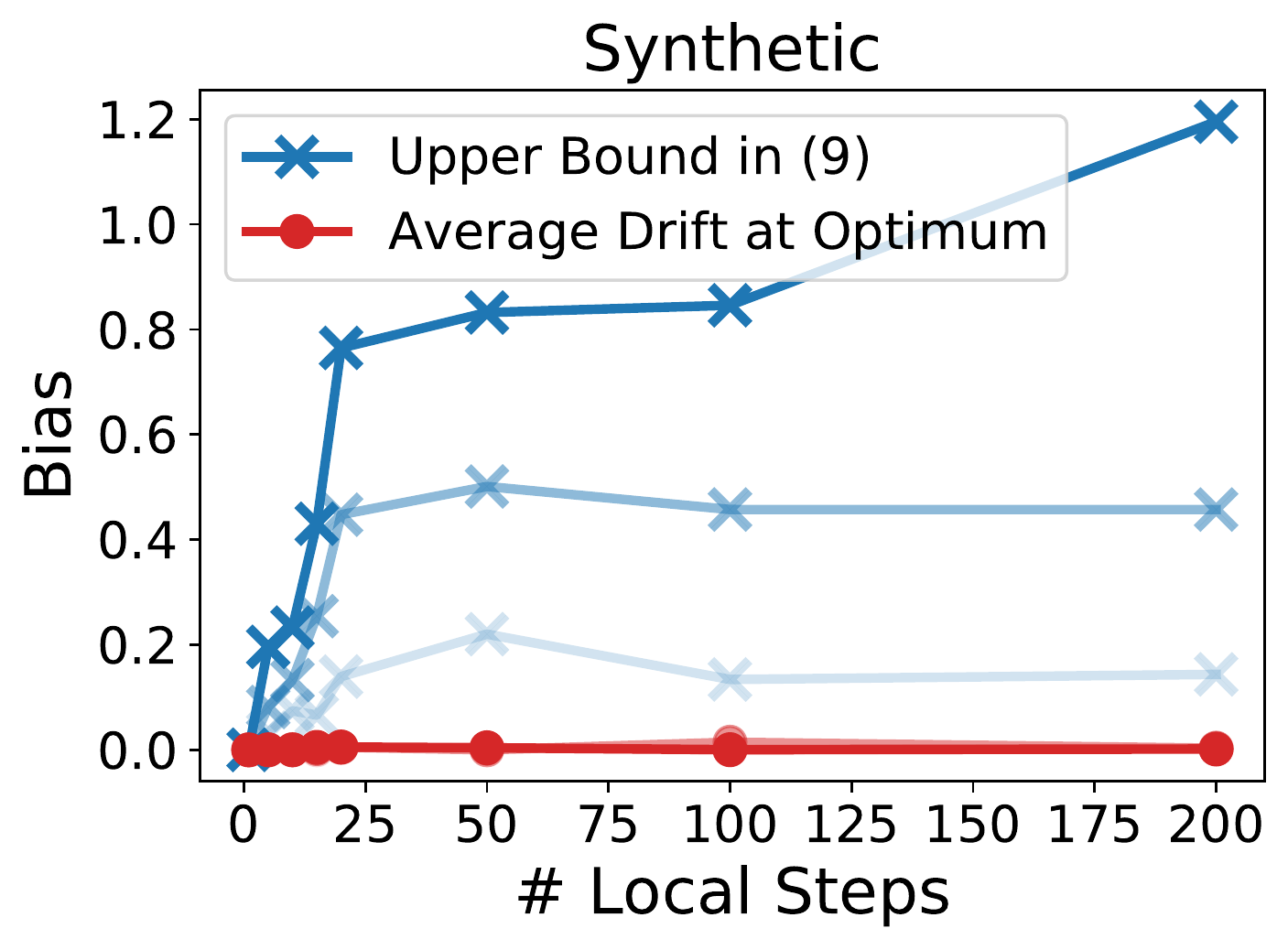}
    \caption{Synthetic Dataset.}
    \label{fig:exp_loose_bounds2c}
    \end{subfigure}%
    ~
    \begin{subfigure}{.33\textwidth}
    \includegraphics[width=\textwidth]{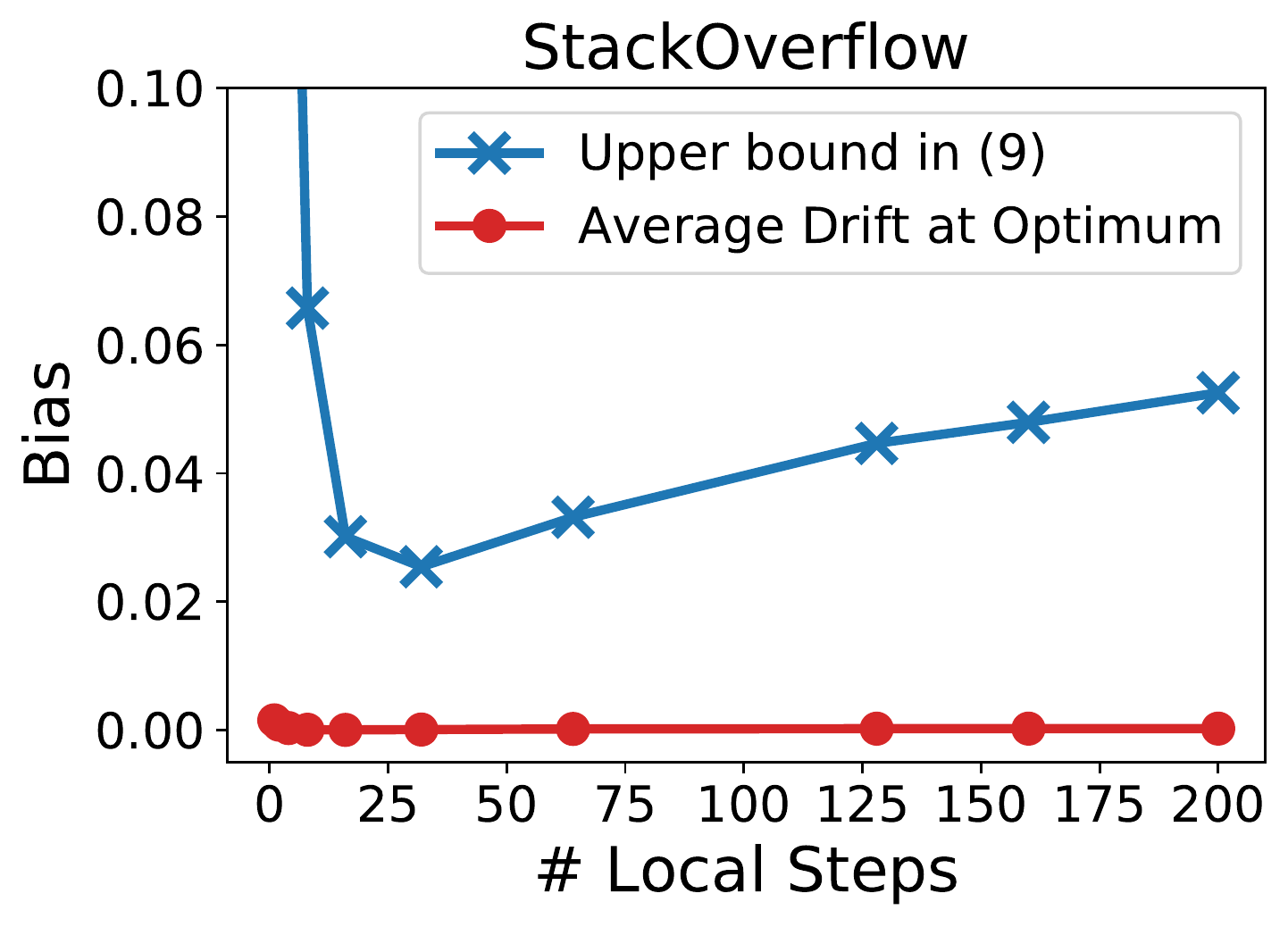}
    \caption{StackOverflow. Stochastic local updates.}
    \label{fig:exp_loose_bounds2d}
    \end{subfigure}%
    \caption{Gradient biases at the optimal point $\vw^*$ on three different datasets. We observe that the \measure (i.e, norm of the gradient bias at $\vw^*$, red line) nearly remain zero on all datasets but its upper bound \Cref{eqn:loose_bnd1} ($\E_c \| \nabla F_c(\vw^*) - \pseudograd_c(\vw^*) \|^2 $, blue lines) slowly become larger when $H$ increases. In (c), different lines in the same color correspond to different levels of data heterogeneity. \GJ{The names 'mean square' and 'squared mean' in the legend can be confusing to readers. Perhaps you can refer to the equations that define these terms?}\GJ{When you say different levels of gradient dissimilarity, is this a parameter that you choose, that is, the upper bound $\zeta$ on gradient dissimilarity? Or is it the actual gradient dissimilarity, the left side of (4)?}\GJ{If it is possible to drop one of these four plots, the remaining three will nicely fit in one row}}
    \label{fig:exp_loose_bounds2}
\end{figure}

Furthermore, we run \fedavg on FEMNIST dataset~\cite{mcmahan2016communication} following the same setup as \cite{reddi2020adaptive} and check the values of the gradient bias at several intermediate points on the optimization trajectory. For each point, we let clients perform local GD with the same local learning rate for multiple steps. As shown in \Cref{fig:exp_loose_bounds1}, we observe a significant gap between the quantity of interest (i.e., the norm of average gradient bias, red lines) and its upper bound \Cref{eqn:loose_bnd1} (i.e., the average of $\ell_2$ norms, blue lines). Especially, at the $50$-th and $100$-th rounds, the upper bound \Cref{eqn:loose_bnd1} is about $10$ times larger. In addition, note that both the norm of average gradient bias and the average of $\ell_2$ norms only increase slowly with the number of local steps $H$ and saturate after certain threshold. However, the upper bound \Cref{eqn:loose_bnd2} uses a quadratic function of $H$ to estimate them. All the above observations suggest that the upper bounds based on the gradient dissimilarity are pessimistic in practice.

\begin{figure}[!htb]
    \centering
    \begin{subfigure}{.3\textwidth}
    \includegraphics[width=\textwidth]{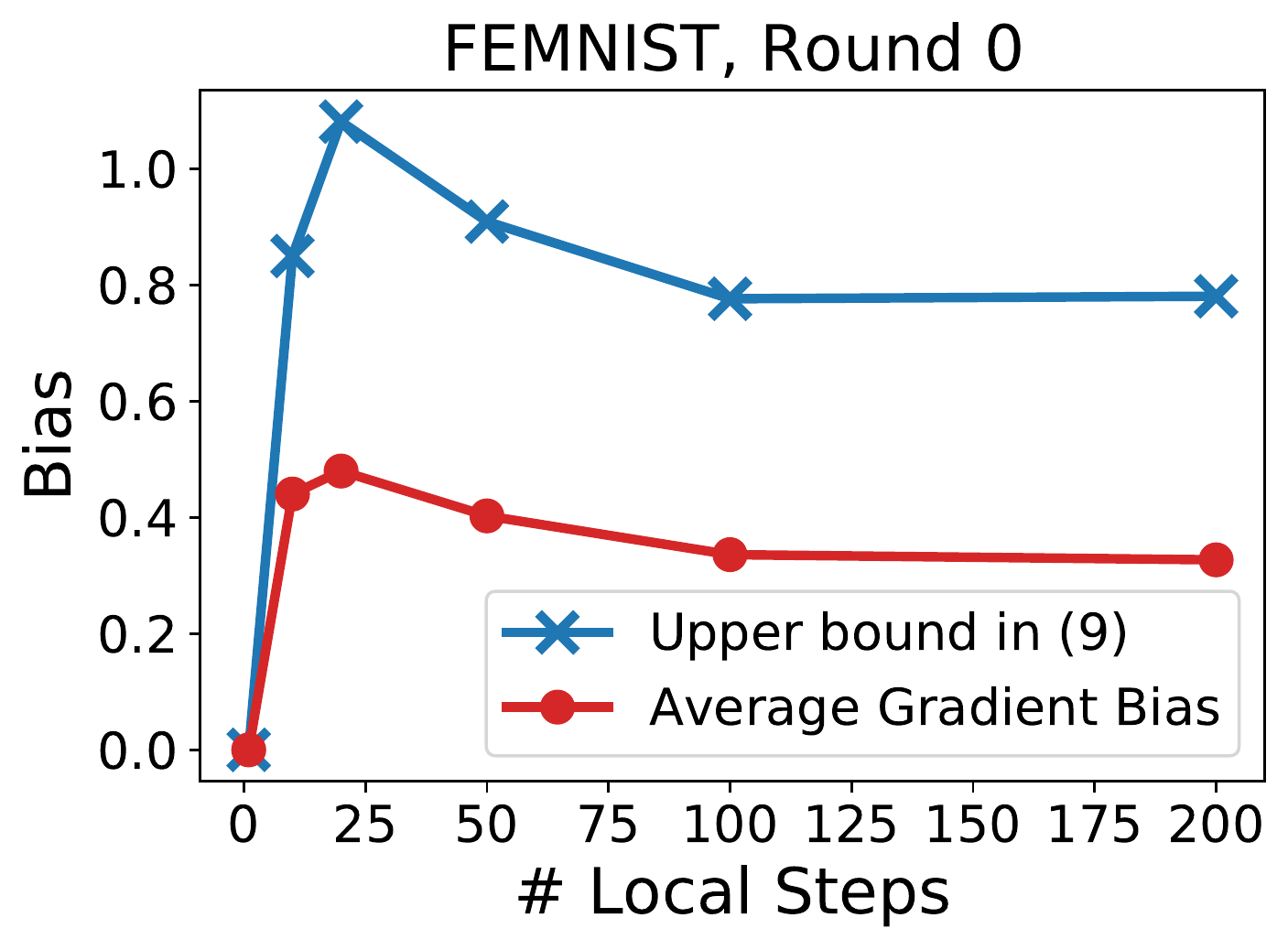}
    \caption{Round 0.}
    \end{subfigure}%
    ~
    \begin{subfigure}{.3\textwidth}
    \includegraphics[width=\textwidth]{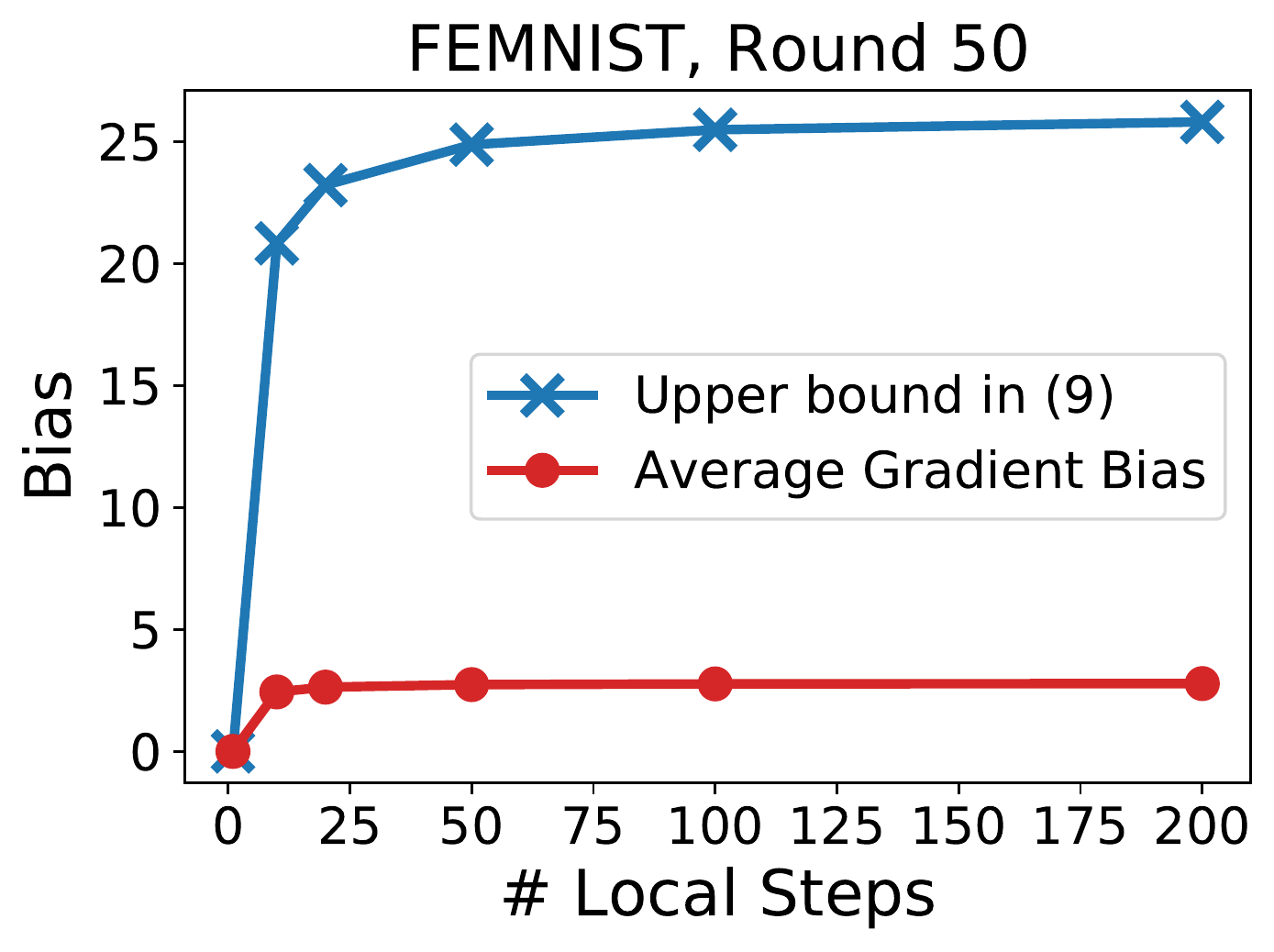}
    \caption{Round 50.}
    \end{subfigure}%
    ~
    \begin{subfigure}{.3\textwidth}
    \includegraphics[width=\textwidth]{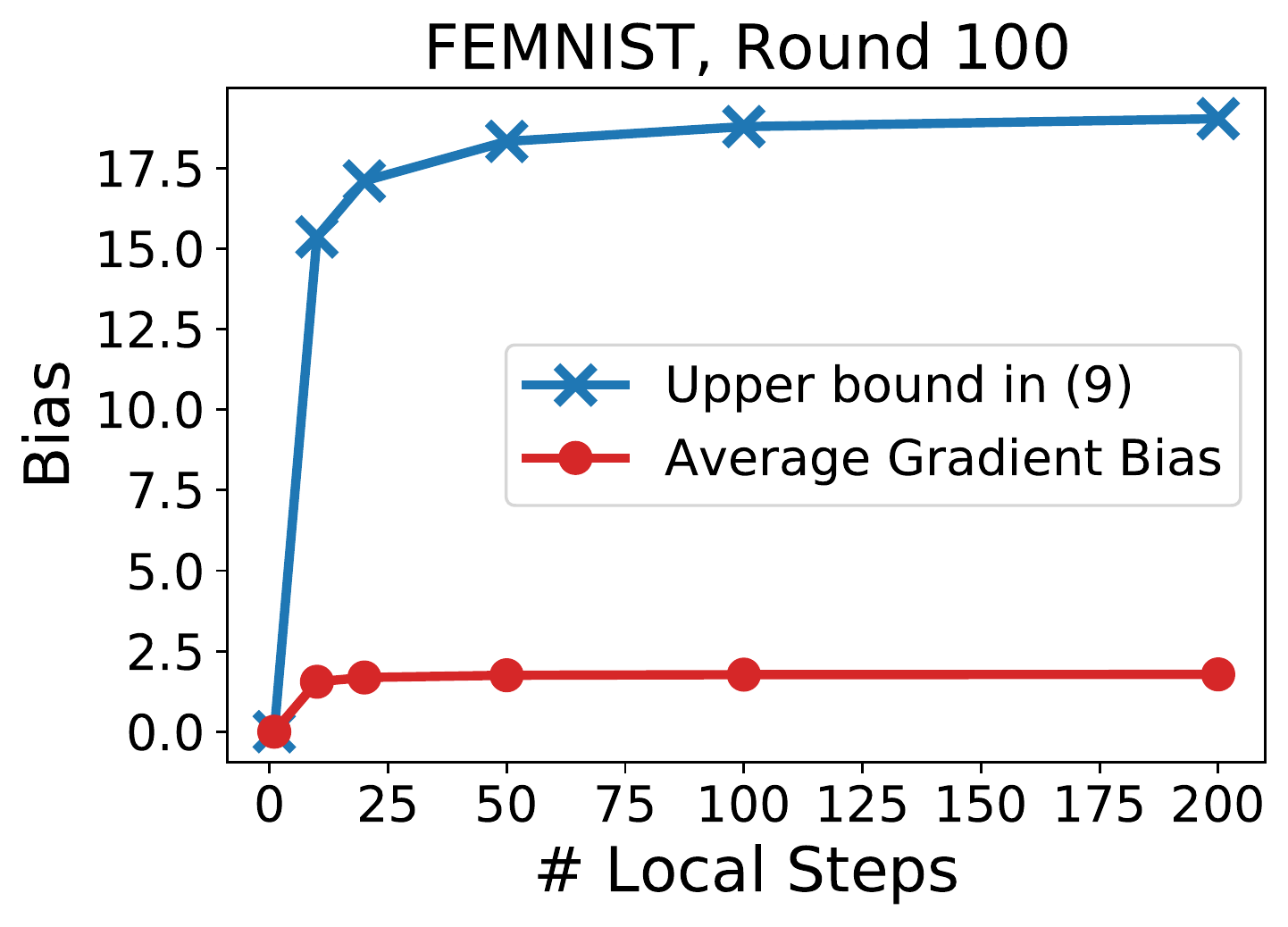}
    \caption{Round 100.}
    \end{subfigure}%
    ~
    \caption{Gradient biases at different points on a \fedavg's optimization trajectory. There is a significant gap between the average gradient bias (red line) and its upper bound (blue lines). Both of them increase and then saturate when increasing $H$.}
    \label{fig:exp_loose_bounds1}
\end{figure}

\section{Convergence Analysis} \label{sec:main_results}
In this section, we will provide a formal analysis (see \Cref{thm:one_round}) showing how the proposed measure of data heterogeneity, the \measure, influences the convergence of \fedavg. In particular, we provide convergence rates for strongly convex functions in \Cref{corollary:rate}, and for quadratic functions in \Cref{corollary:rate_quadratic}. The results show that there exist regimes ($\rho\simeq 0$) where \fedavg enjoys the same rates in both homogeneous and heterogeneous data settings.

We first generalize the pseudo gradient in \Cref{eq:quadratic-pseudo-grad} to stochastic settings.
\begin{defn}[\textbf{Pseudo-Gradient}]\label{defn:pseudo-gradient}
Given $\vw\in \R^d$, suppose $\vw_{c,\text{GD}}^{(H)}$ and $\vw_c^{(H)}$ denote the locally trained model on client $c$ after performing $H$ steps of GD and SGD using learning rate $\eta$, respectively. Then, we define
\begin{align}
    \text{Deterministic Pseudo-gradient:} \quad \pseudograd_c(\vw) &\triangleq \frac{1}{\eta H}(\vw - \vw_{c,\text{GD}}^{(H)}), \label{eqn:pseudo-gradient} \\
    \text{Stochastic Pseudo-gradient:} \quad \hat{\pseudograd}_c(\vw) &\triangleq \frac{1}{\eta H}(\vw - \vw_{c}^{(H)}). \label{eqn:stochastic-pseudo-gradient}
\end{align}
The average pseudo-gradients across all clients are defined as $\pseudograd = \E_c[\pseudograd_c]$ and $\hat{\pseudograd} = \E_c[\hat{\pseudograd_c}]$.
\end{defn}
Instead of treating the stochastic pseudo-gradient \Cref{eqn:stochastic-pseudo-gradient} as a biased estimation of the original gradient $\nabla F_c(\vw)$, we treat it as the biased version of \Cref{eqn:pseudo-gradient}. This new perspective allows us to extend previous analyses~\cite{charles2021convergence,charles2022iterated,malinovskiy2020local} to the stochastic setting\footnote{Since $\E[\hat{\pseudograd}(\vw)] \neq \pseudograd(\vw)$, the extension to the stochastic setting is non-trivial.} and explicitly feature the impacts of \measure in the final error bound. Also, following many standard SGD analyses~\cite{gower2019sgd,stich2019unified}, we assume the stochastic noise is upper bounded:
\begin{align}
	\E_\xi\vecnorm{\nabla F_c(\vw;\xi) - \nabla F_c(\vw)} \leq \sigma^2
\end{align}
where $\nabla F_c(\vw;\xi)$ denotes the stochastic gradient with respect to a random mini-batch $\xi$.

\subsection{Main Results}
Given the definition of pseudo-gradients, we then derive the following theorem.
\begin{thm}\label{thm:one_round}
When each local objective function is $L$-Lipschitz smooth and $\mu$-strongly convex and the learning rates satisfy $\lr\leq 1/4, \eta \leq \min\{1/L, 1/\mu H\}$, after $T$ rounds of \fedavg (Local SGD), we have
\begin{align}
\E\vecnorm{\vw^{(T)} - \vw^*}
\leq& (1-\frac{1}{2}\lr \eta H \mu)^T \vecnorm{\vw^{(0)} - \vw^*} + \frac{2\lr \eta H}{\mu} \max_{\vw} \Var[\hat{\pseudograd}(\vw)] \nonumber \\
    &+ \frac{20}{\mu^2}\max_{\vw} \E_c\vecnorm{\delta_c(\vw)} + \frac{20\rho^2}{\mu^2} \label{eqn:one_round}
\end{align}
where $\rho$ denotes the \measure, $\E[\cdot], \Var[\cdot]$ are taken with respect to random noise in stochastic local updates, and $\delta_c(\vw) = (\vw_{c,\text{GD}}^{(H)} - \E[\vw_c^{(H)}])/(\eta H)$ denotes the iterate bias between local GD and local SGD on client $c$.
\end{thm}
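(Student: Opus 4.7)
The plan is to analyze one round of \fedavg through the pseudo-gradient lens and iterate. Starting from the update $\vw^{(t+1)} = \vw^{(t)} - \lr \eta H \hat{\pseudograd}(\vw^{(t)})$, I would expand $\vecnorm{\vw^{(t+1)} - \vw^*}$, condition on $\vw^{(t)}$, and take expectation over the local stochasticity at round $t$. A direct consequence of \Cref{defn:pseudo-gradient} is the identity
\begin{align*}
\E\brackets{\hat{\pseudograd}(\vw)} = \pseudograd(\vw) + \E_c\brackets{\delta_c(\vw)},
\end{align*}
which splits the inner-product term $\inprod{\vw^{(t)} - \vw^*}{\E[\hat{\pseudograd}(\vw^{(t)})]}$ into a deterministic pseudo-gradient piece and an iterate-bias piece, while the second moment splits as $\Var[\hat{\pseudograd}(\vw^{(t)})]$ plus $\vecnorm{\E[\hat{\pseudograd}(\vw^{(t)})]}$.

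To bring the \measure $\rho$ into the picture, I would add and subtract $\pseudograd(\vw^*)$ both in the inner product and in the mean. The residual $\inprod{\vw^{(t)} - \vw^*}{\pseudograd(\vw^*)}$ is absorbed via Young's inequality into an $\alpha \mu\vecnorm{\vw^{(t)} - \vw^*}$ plus $\rho^2/(4\alpha \mu)$ term, and similarly for the bias $\E_c[\delta_c(\vw^{(t)})]$. The main increment $\inprod{\vw^{(t)} - \vw^*}{\pseudograd(\vw^{(t)}) - \pseudograd(\vw^*)}$ is lower-bounded by an auxiliary strong-monotonicity/cocoercivity lemma for the deterministic pseudo-gradient of the form
\begin{align*}
\inprod{\vw - \vw^*}{\pseudograd(\vw) - \pseudograd(\vw^*)} \geq c_1 \mu \vecnorm{\vw - \vw^*} + c_2 \vecnorm{\pseudograd(\vw) - \pseudograd(\vw^*)}.
\end{align*}
For quadratics this bound follows from \Cref{lem:average_drift} by diagonalizing each $\nabla^2 F_c(\vw^*)$ and noting that the spectrum of $\frac{1}{\eta H}\sum_{h=0}^{H-1}[\mI - (\mI - \eta \nabla^2 F_c(\vw^*))^h]$ is squeezed into a favorable interval under $\eta \leq 1/L$; for general $L$-smooth, $\mu$-strongly convex $F_c$, I would chain cocoercivity of $\nabla F_c$ along the local trajectory through the representation $\pseudograd_c(\vw) = \frac{1}{H}\sum_{h=0}^{H-1}\nabla F_c(\vw_c^{(h)})$ together with the contraction $\norm{\vw_c^{(h)} - \vu_c^{(h)}} \leq \norm{\vw - \vu}$ valid for $\eta \leq 1/L$. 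This extends the surrogate-loss viewpoint of \cite{charles2021convergence,charles2022iterated,malinovskiy2020local} beyond the quadratic case and into the stochastic setting.

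Assembling everything with repeated Young's inequality and using $\lr \leq 1/4$ and $\eta \leq \min\{1/L, 1/\mu H\}$ to dominate the quadratic $(\lr \eta H)^2 \vecnorm{\E\hat{\pseudograd}}$ contribution by the linear progress yields a one-step contraction of the form
\begin{align*}
\E\vecnorm{\vw^{(t+1)} - \vw^*} \leq \parenth{1 - \tfrac{1}{2}\lr \eta H \mu} \vecnorm{\vw^{(t)} - \vw^*} + (\lr \eta H)^2 \Var[\hat{\pseudograd}] + \tfrac{C \lr \eta H}{\mu}\parenth{\rho^2 + \E_c\vecnorm{\delta_c(\vw^{(t)})}}
\end{align*}
for an explicit constant $C$. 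Unrolling geometrically gives the $(1 - \tfrac{1}{2}\lr \eta H \mu)^T$ factor on the initial error, and summing the per-round floors as a geometric series divides them by $\tfrac{1}{2}\lr \eta H \mu$, producing the $\tfrac{2\lr \eta H}{\mu}$ coefficient on the variance and the $\tfrac{20}{\mu^2}$ coefficients on $\rho^2$ and $\max_\vw \E_c\vecnorm{\delta_c(\vw)}$.

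The hard part is the strong-monotonicity/cocoercivity lemma for $\pseudograd$ that is uniform in $H$: naively iterating $H$ copies of the one-step monotonicity of $\nabla F_c$ loses a factor of $(1 - \eta \mu)^H$ that becomes vacuous when $\eta H \mu = \Theta(1)$. The resolution is to exploit the time-averaged representation of $\pseudograd_c$ and the smooth dependence of $\vw_c^{(h)}$ on the starting point, so the modulus degrades by only a constant factor rather than geometrically in $H$. Tracking Young-inequality constants carefully so that the surviving coefficients match $\tfrac{1}{2}$, $\tfrac{2\lr \eta H}{\mu}$, and $\tfrac{20}{\mu^2}$ as stated in \eqref{eqn:one_round} is the remaining bookkeeping.
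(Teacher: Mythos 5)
Your proposal is correct and follows essentially the same route as the paper's proof: a one-step expansion with the bias--variance split of $\hat{\pseudograd}$, centering at $\pseudograd(\vw^*)$ to expose $\rho$ and $\delta$, a strong-monotonicity/co-coercivity lemma for the deterministic pseudo-gradient whose moduli $\effmu=[1-(1-\eta\mu)^H]/(\eta H)\geq\mu/2$ and $\effLip$ degrade only by constants under $\eta\leq\min\{1/L,1/(\mu H)\}$, Young's inequality, and geometric unrolling. You also correctly identify the crux (avoiding a modulus that decays geometrically in $H$) and its resolution via the mean-value-theorem representation $\eta H\bigl(\pseudograd_c(\vw)-\pseudograd_c(\vu)\bigr)=\bigl[\mI-\prod_{k}(\mI-\eta\mD_c^{(k)})\bigr](\vw-\vu)$, which is exactly what the paper's Lemma~\ref{lem:properties} does.
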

\paragraph{The Effects of Data Heterogeneity.}
As shown in \Cref{thm:one_round}, the effects of data heterogeneity can be fully captured by the \measure $\rho$. Instead of providing a uniform upper bound for $\norm{\pseudograd(\vw) - \nabla F(\vw)}$ like previous works, now we just need to ensure that $\rho=\norm{\pseudograd(\vw^*) - \nabla F(\vw^*)}=\|\pseudograd(\vw^*)\|$ is a small value. As we observed in our experimental results (see \Cref{fig:exp_loose_bounds1,fig:exp_loose_bounds2}), while $\norm{\pseudograd(\vw) - \nabla F(\vw)}$ can be large, the value of \measure $\rho=\|\pseudograd(\vw^*)\|$ is very close to zero on multiple realistic training tasks (FEMNIST, StackOverflow) for a reasonable range of $H$. In addition, in our proposed quadratic problem, one can prove that \fedavg implicitly ensures that for any $H$, we have $\rho \rightarrow 0$ almost surely as $M \rightarrow \infty$.
All these observations suggest that it is possible that heterogeneous data on clients do not have negative impacts on the convergence due to $\rho \approx 0$. However, this important regime has not been investigated before. 

\paragraph{The Effects of Stochastic Noise.}
From \Cref{thm:one_round}, we can observe that the stochastic noise during local updates influences the second and the third terms on the right-hand-side of \Cref{eqn:one_round}. The upper bounds of these two terms only depend on the dynamics of SGD algorithm, which has been well understood in literature. For example, in \cite{khaled2020tighter}, the authors show that $\E\|\vw^{(H)} - \E[\vw^{(H)}]\|^2 \leq 2\eta^2 H \sigma^2$. As a consequence, we directly obtain that
\begin{align}
	\Var[\hat{\pseudograd}(\vw)] = \frac{1}{M}\Var[\hat{\pseudograd}_c(\vw)] = \frac{1}{\eta^2 H^2 M} \E \vecnorm{\vw_c^{(H)} - \E[\vw_c^{(H)}]} \leq \frac{2\sigma^2}{MH}. \label{eqn:variance}
\end{align}
As for the iterate bias, one can obtain
\begin{align}
	\E_c\vecnorm{\delta_c(\vw)} \leq \eta^2 L^2 \sigma^2 (H-1), \label{eqn:iterate_bias}
\end{align}
the proof of which is provided in the Appendix. After substituting \Cref{eqn:variance,eqn:iterate_bias} into \Cref{eqn:one_round} and optimizing the learning rates, we can obtain the following convergence rate for \fedavg.
\begin{corollary}[\textbf{Convergence Rate for Strongly Convex Functions}]\label{corollary:rate}
Under the same setting as \Cref{thm:one_round}, when $\lr=1/4, \eta = \gO(1/\mu H T)$, the convergence rate of \fedavg is
\begin{align}
    \E\vecnorm{\vw^{(T)} - \vw^*} = \widetilde{\gO}\parenth{\frac{\sigma^2}{M H T} + \frac{\sigma^2}{H T^2} + \rho^2}. \label{eqn:final_rate}
\end{align}
If clients perform local GD instead of local SGD, then when $\eta = \min\{1/L, 1/(\mu H)\}$, we have
\begin{align}
    \vecnorm{\vw^{(T)} - \vw^*} = \gO\parenth{\exp\parenth{-\frac{T}{16\kappa}\min\{\kappa, H\}} + \rho^2} \label{eqn:final_rate_gd}
\end{align}
where $\kappa = L/\mu$ denotes the condition number.
\end{corollary}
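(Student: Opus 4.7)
The plan is to substitute the variance bound (\cref{eqn:variance}) and the iterate-bias bound (\cref{eqn:iterate_bias}) directly into the one-round recursion of \cref{thm:one_round}, and then to tune the two learning rates $\lr,\eta$ so as to balance the resulting four terms. After the substitution the right-hand side reads
\begin{align*}
\E\vecnorm{\vw^{(T)}-\vw^*} \;\le\; \bigl(1-\tfrac{1}{2}\lr\eta H\mu\bigr)^T \vecnorm{\vw^{(0)}-\vw^*} + \frac{4\lr\eta\sigma^2}{\mu M} + \frac{20\,\eta^2 L^2 \sigma^2 (H-1)}{\mu^2} + \frac{20\rho^2}{\mu^2},
\end{align*}
and I only need to make an explicit choice of $(\lr,\eta)$ respecting $\lr\le 1/4$ and $\eta\le\min\{1/L,1/(\mu H)\}$.

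For the stochastic rate \cref{eqn:final_rate}, I would fix $\lr = 1/4$ and adopt the standard strongly-convex-SGD schedule $\eta = \Theta(\log(MT)/(\mu HT))$. This choice makes the contraction factor $(1-\tfrac{1}{2}\lr\eta H\mu)^T$ decay at least as fast as $(MT)^{-\Omega(1)}$, so the initial-error term is polynomially dominated by the remaining ones; the noise term $4\lr\eta\sigma^2/(\mu M)$ becomes $\widetilde{\gO}(\sigma^2/(MHT))$; the iterate-bias term $20\eta^2 L^2\sigma^2 H/\mu^2$ becomes $\widetilde{\gO}(\sigma^2/(HT^2))$; and the $\rho^2$ term is unchanged. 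Absorbing $L,\mu$ and $\log$ factors into $\widetilde{\gO}$ gives \cref{eqn:final_rate}. The hypothesis $\eta\le\min\{1/L,1/(\mu H)\}$ holds automatically once $T$ is sufficiently large.

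For the deterministic rate \cref{eqn:final_rate_gd}, setting $\sigma = 0$ kills both the noise and the iterate-bias terms, leaving only the contraction plus $\rho^2/\mu^2$. Taking $\eta = \min\{1/L,\,1/(\mu H)\}$ yields $\tfrac{1}{2}\lr\eta H\mu = \tfrac{1}{8}\min\{H\mu/L,\,1\} = \tfrac{1}{8\kappa}\min\{H,\kappa\}$, so $(1-\tfrac{1}{2}\lr\eta H\mu)^T \le \exp\bigl(-T\min\{H,\kappa\}/(8\kappa)\bigr)$, matching \cref{eqn:final_rate_gd} up to the absolute constant in the exponent.

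The only step that is not pure arithmetic is the logarithmic step-size choice in the stochastic case: the contraction $(1-\tfrac{1}{2}\lr\eta H\mu)^T$ and the noise $\lr\eta\sigma^2/(\mu M)$ push $\eta$ in opposite directions, and the $\log(MT)$ factor is precisely what drives the contraction polynomially to zero while still keeping the noise term on the order of $1/(MHT)$. Once that schedule is fixed, the rest is straightforward substitution into \cref{thm:one_round} and collection of leading terms.
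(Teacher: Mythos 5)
Your proposal is correct and follows essentially the same route as the paper: substitute the variance bound \cref{eqn:variance} and iterate-bias bound \cref{eqn:iterate_bias} into the recursion of \Cref{thm:one_round}, pick $\eta=\widetilde{\Theta}(1/(\mu HT))$ with a logarithmic factor so the contraction term is polynomially dominated in the stochastic case, and pick $\eta=\min\{1/L,1/(\mu H)\}$ with the case split $H\mu\gtrless L$ in the deterministic case (the paper unrolls the per-round recursion after subtracting the $\rho^2$ fixed point rather than plugging into the already-unrolled bound, and uses $\lr=1/8$ rather than $1/4$, but these differences only affect absolute constants).
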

In the special regime of $\rho \approx 0$ (as in our proposed quadratic problem when $M \rightarrow \infty$, and empirical observations on multiple datasets), \Cref{corollary:rate} states that, \fedavg has the same convergence rates in both homogeneous and heterogeneous data settings. As a result, data heterogeneity does not have negative impacts. However, in previous works based on gradient dissimilarity, even if $\rho=0$, there is an additional error. Moreover, in the deterministic setting, \fedavg's rate $\gO(\exp(-\mu HT/L))$ or $\gO(\exp(-T))$ is strictly better than GD's rate $\Omega(\exp(-\mu T / L))$ even in the presence of heterogeneous data. In contrast, as we discussed in \Cref{sec:overview_prev}, previous works can only get $\gO(1/T^2)$. A more detailed comparison with previous results is presented in \Cref{tab:comparison_v2}.

Furthermore, it is worth noting that the idea of treating local updates as pseudo-gradients has appeared in \cite{charles2021convergence,charles2022iterated}. However, their analysis techniques require additional specialized assumptions about pseudo-gradients that can be difficult to satisfy, and they only considered deterministic \fedavg. \Cref{thm:one_round,corollary:rate} developed new techniques to analyze pseudo-gradients under common loss function assumptions in optimization, and showed they can be applied to the stochastic setting.
\begin{table}[!th]
    \centering
    \begin{tabular}{l| l | l }\toprule
    Algorithm & Worst-case error & Comm. rounds to attain $\epsilon$ error (when $\rho=0$)\\ \midrule
    GD & $\exp(-T/\kappa)$  & $\gO(\kappa \log(1/\epsilon))$\\
    \fedavg \citep{koloskova2020unified}     & $\zeta^2/T^2$ & $\gO(1/\epsilon^2)$\\
    \fedavg \citep{woodworth2020minibatch}     & $1/(HT+H^2T^2) + \zeta^2/T^2$ & $\gO(1/\epsilon^2)$ \\
    \fedavg (Ours)     & $\exp(- T \min\{1, H/\kappa\}) + \rho^2$ & $\gO(\max\{1, \kappa/H\}\log(1/\epsilon))$\\ \bottomrule
    \end{tabular}
    \vspace{1em}
    \caption{Comparison with existing results for strongly convex objectives functions with deterministic local updates. In the table, the error is measured by the distance to the optimum $\|\vw - \vw^*\|^2$, and $\kappa=L/\mu$ is the condition number. Also, we omit logarithmic factors. Compared to previous results, we show that in the considered setting: (i) \fedavg enjoys linear convergence to the global optimum, and (ii) the multiple local steps of \fedavg mitigate the impact of of ill-conditioning (high condition number).}
    \label{tab:comparison_v2}
\end{table}

\subsection{Extensions}
Another benefit of using our analysis technique is that it allows us to easily incorporate additional assumptions. Below we provide several examples. 

(1) \emph{Client Sampling}: If we consider client sampling in \fedavg, then only the variance term in \Cref{eqn:one_round} will change and all other terms will not be affected at all. One can obtain new convergence guarantees by analyzing the variance of different sampling schemes and then simply substituting them into \Cref{eqn:one_round}. Standard techniques to analyze client sampling~\cite{yang2020achieving} can be directly applied.

(2) \emph{Alternative Heterogeneity Assumptions}: It is also possible to replace the data heterogeneity assumptions. Instead of setting the \measure $\rho$ to zero, one can also choose to apply the existing gradient dissimilarity technique to upper bound it. Alternatively, Hessian dissimilarity assumption (\cite{karimireddy2019scaffold}, $\|\nabla^2 F_c(\vw) - \nabla^2 F(\vw)\|\leq \beta$) may lead to an improved bound. We leave this extension to future work.

(3) \emph{Third-order Smoothness}: When the local objective functions satisfy third-order smoothness ($\norm{\nabla^2 F_c(\vw) - \nabla^2 F_c(\vu)} \leq Q \norm{\vw - \vu}$), the bound of the iterate bias $\delta(\vw)$ can be further improved while all other terms remain unchanged. According to \cite{glasgow2021sharp}, one can obtain $\norm{\delta(\vw)} \leq \frac{1}{4}\eta^2 H Q \sigma^2$. In the special case when local objective functions are quadratic, we have $Q=0$. That is, there is no iterate bias. As a consequence, the convergence rate of \fedavg can be significantly improved. Specifically, we have the following formal corollary.
\begin{corollary}[\textbf{Convergence Rate for Quadratic Functions}]\label{corollary:rate_quadratic}
Suppose each local objective functions is quadratic. Then, the convergence rate of \fedavg with a fixed learning rate is given as follows:
\begin{align}
    \E\vecnorm{\vw^{(T)} - \vw^*} = \widetilde{\gO}\parenth{\exp\parenth{-\frac{T}{16\kappa}\min\{\kappa, H\}} + \frac{\sigma^2}{M H T} + \rho^2}.
    \label{eqn:final_rate_quadratic}
\end{align}
\end{corollary}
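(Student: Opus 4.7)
The plan is to specialize \Cref{thm:one_round} to the quadratic case and show that, because the third-order derivatives vanish, the iterate-bias term $\E_c\vecnorm{\delta_c(\vw)}$ drops out exactly (not merely up to $Q$). Once this term is eliminated, only the exponential decay term, the pseudo-gradient variance term, and $\rho^2$ remain, and tuning $\eta$ balances the first two to yield the claimed rate.

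The key step is the observation that, when each $F_c$ is quadratic, $\nabla F_c(\cdot)$ is affine, so for any stochastic gradient oracle with $\E[\nabla F_c(\vw;\xi)] = \nabla F_c(\vw)$ we have
\begin{align*}
    \E[\vw_c^{(h+1)}] &= \E[\vw_c^{(h)}] - \eta\, \E[\nabla F_c(\vw_c^{(h)};\xi)] \\
    &= \E[\vw_c^{(h)}] - \eta\, \nabla F_c\bigl(\E[\vw_c^{(h)}]\bigr),
\end{align*}
which is exactly the local GD recursion. Iterating from $\vw$ gives $\E[\vw_c^{(H)}] = \vw_{c,\text{GD}}^{(H)}$, so $\delta_c(\vw) = 0$ identically. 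This is the quadratic instance of the third-order-smoothness bound $\norm{\delta(\vw)} \leq \tfrac{1}{4}\eta^2 H Q \sigma^2$ cited from \cite{glasgow2021sharp} with $Q=0$, which the excerpt already highlights.

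Plugging $\delta_c \equiv 0$ and the variance bound $\Var[\hat{\pseudograd}(\vw)] \leq 2\sigma^2/(MH)$ from \Cref{eqn:variance} into \Cref{thm:one_round} gives
\begin{align*}
    \E\vecnorm{\vw^{(T)}-\vw^*} \leq (1-\tfrac{1}{2}\lr\eta H\mu)^T\vecnorm{\vw^{(0)}-\vw^*} + \frac{4\lr\eta\sigma^2}{\mu M} + \frac{20\rho^2}{\mu^2}.
\end{align*}
Fixing $\lr = 1/4$ and choosing $\eta = \min\bigl\{1/L,\ 1/(\mu H),\ c\log(T)/(\mu H T)\bigr\}$ for a suitable constant $c$ then drives the geometric term down to $\exp\bigl(-\tfrac{T}{16\kappa}\min\{\kappa,H\}\bigr)$ in the noiseless regime (where $\eta$ saturates at $\min\{1/L,1/(\mu H)\}$) and, when $T$ is large enough that the $\log(T)/(\mu HT)$ branch becomes active, yields a noise floor of order $\sigma^2/(MHT)$ up to $\log T$ factors absorbed into $\widetilde{\gO}$.

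The main obstacle is calibrating a single step size so that both the exponential and the $\sigma^2/(MHT)$ terms appear simultaneously in one bound; this is the standard strongly-convex SGD tuning trick, and the $\widetilde{\gO}$ notation is what absorbs the resulting logarithmic factors. Everything else is mechanical: the quadratic structure is used only to nullify $\delta_c$, the variance bound is inherited from \Cref{eqn:variance}, and the analysis of $(1-\tfrac{1}{2}\lr\eta H\mu)^T$ under the two branches of $\eta$ reproduces the $\min\{\kappa,H\}$ inside the exponent exactly as in \Cref{corollary:rate}.
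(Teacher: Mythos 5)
Your proposal is correct and follows essentially the same route as the paper's appendix proof: there, too, the iterate bias is set to zero in the quadratic case, only the variance contribution $\efflr^2\cdot 2\sigma^2/(MH)$ survives in the one-round recursion, and the standard strongly-convex step-size tuning (the paper cites \cite{stich2019unified}) yields $\exp\parenth{-\frac{T}{16\kappa}\min\{\kappa,H\}} + \widetilde{\gO}\parenth{\sigma^2/(MHT)} + \rho^2$. Your explicit derivation of $\delta_c\equiv 0$ from the affineness of $\nabla F_c$ (so that the expected SGD iterate exactly follows the local GD recursion) is a correct spelling-out of a step the paper only asserts.
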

\SK{Can we update the corollary to include dependence on $\rho$ similar to Corollary~\ref{corollary:rate}?}
Compared to \Cref{corollary:rate} for strongly covnex functions, \Cref{corollary:rate_quadratic} gets an improved convergence rate due to the nice properties of quadratic functions. In the special case when $\rho=0$, \fedavg is strictly better than mini-batch SGD, which is $\widetilde{\gO}(\exp(-T/\kappa) + \sigma^2/(M H T))$.

\section{Concluding Remarks}
In this paper, we aim at filling the gap between theory and practice about the convergence of the popular \fedavg algorithm. We found that previous analyses based on the bounded gradient dissimilarity assumption can be too pessimistic for practical applications. In order to accurately capture the effects data heterogeneity, we proposed a new measure, \measure, which remain close to zero across multiple real-world federated training tasks as well as our proposed quadratic problem, suggesting that data heterogeneity may have limited effect degrading FedAvg performance. At last, we provide a convergence analysis to illustrate how the \measure influences the convergence of \fedavg. The above new results can help to explain the empirical success of \fedavg. Future works may include extending the convergence analysis to general convex functions, and providing refined bounds for the \measure.

\bibliographystyle{abbrv}
\bibliography{iclr2022_conference}

\newpage
\appendix
\section{Details of the Synthetic Dataset}
The synthetic dataset we used in \Cref{fig:exp_loose_bounds2} is constructed based on the statistical model proposed in \Cref{sec:toy_example}. In particular, we set $\vx_{c,i}\in \R^{d}$ to be a random vector, where $d=30$ and each element is sampled from a uniform distribution $\gU(0,\nu_c)$, $\nu_c\sim\gU(0,5)$ is different on different clients. We assume there are total $M=100$ clients, each of which has $100$ data samples. Besides, we set $\epsilon_{c,i} \sim \gN(0, 0.09)$ and generate $\vw^*$ from a distribution $\gN(0, 1)$. As we shown in \Cref{eqn:quad_grad_dis}, the gradient dissimilarity tightly depends on the scale of $\epsilon_{c,i}$. So we control the gradient dissimilarity via changing the variance of $\epsilon_{c,i}$.

\section{Proof of \Cref{lem:biased_gd}}\label{sec:proof_lem1}
\begin{proof}
For the ease of writing, we define $\vw^+ = \vw^{(t+1)}$ and $\vw = \vw^{(t)}$. Then, according to Lipschitz Smoothness and $\lr\eta H L \leq 1$, we have
\begin{align}
    F(\vw^+) 
    \leq& F(\vw) - \lr\eta H \inprod{\nabla F(\vw)}{\pseudograd(\vw)} + \frac{\lr^2\eta^2 H^2 L}{2} \vecnorm{\pseudograd(\vw)} \\
    =& F(\vw) - \frac{\lr\eta H}{2}\vecnorm{\nabla F(\vw)} + \frac{\lr\eta H}{2}\vecnorm{\nabla F(\vw) - \pseudograd(\vw)} - \frac{\lr\eta H}{2}(1- \lr\eta H L)\vecnorm{\pseudograd(\vw)} \\
    \leq& F(\vw) - \frac{\lr\eta H}{2}\vecnorm{\nabla F(\vw)} + \frac{\lr\eta H}{2}\vecnorm{\nabla F(\vw) - \pseudograd(\vw)}.
\end{align}
Besides, note that strongly convexity yields $\vecnorm{\nabla F(\vw)} \geq 2\mu(F(\vw) - F(\vw^*))$. So we have
\begin{align}
    F(\vw^+) - F(\vw^*)
    \leq \parenth{1 - \lr\eta H \mu}(F(\vw) - F(\vw^*)) + \frac{\lr\eta H}{2}\vecnorm{\nabla F(\vw) - \pseudograd(\vw)}.
\end{align}
After total $T$ communication rounds, it follows that
\begin{align}
    F(\vw^{(T)}) - F(\vw^{(*)})
    \leq \parenth{1 - \lr\eta H \mu}^T(F(\vw^{(0)}) - F(\vw^*)) + \frac{1}{2\mu T}\sum_{t=0}^{T-1}\vecnorm{\nabla F(\vw^{(t)}) - \pseudograd(\vw^{(t)})}.
\end{align}
\end{proof}

\section{Proof of \Cref{thm:quadratic_analysis}}\label{sec:prof_lem1}
\begin{proof}
According to the local update rule, we have
\begin{align}
    \vw_c^{(t,h+1)} 
    =& \vw_c^{(t,h)} - \eta \nabla F_c(\vw_c^{(t,h)}) \\
    =& \vw_c^{(t,h)} - \eta \brackets{\mA_c(\vw_c^{(t,h)} - \vw^*) - \vb_c} \\
    =& \parenth{\mI - \eta\mA_c}\vw_c^{(t,h)} + \eta \parenth{\mA_c\vw^* + \vb_c}.
\end{align}
Subtracting $\vw_c^* = \vw^* + \mA_c^{-1}\vb_c$ on both sides, it follows that
\begin{align}
    \vw_c^{(t,h+1)} - \vw_c^* 
    =& \parenth{\mI - \eta \mA_c}\parenth{\vw_c^{(t,h)} - \vw_c^*} \\
    =& \parenth{\mI - \eta \mA_c}^{h+1} \parenth{\vw^{(t)} - \vw_c^*}.
\end{align}
Setting $h=H$, we have $\vw_c^{(t,H)} = (\mI - \eta\mA_c)^H (\vw^{(t)} - \vw_c^*)+\vw_c^*$. Recall the definition of pseudo-gradient~\Cref{eq:quadratic-pseudo-grad}, we get
\begin{align}
    \pseudograd_c(\vw^{(t)})
    =& \frac{1}{\eta H} (\vw^{(t)} - \vw^{(t,H)}) \\
    =& \frac{1}{\eta H}\brackets{\mI - (\mI - \eta \mA_c)^H}  (\vw^{(t)} - \vw^*_c).
\end{align}
According to the global update rule of \fedavg, one can obtain that
\begin{align}
    \vw^{(t+1)}
    =& \vw^{(t)} - \lr\eta H\E_c\pseudograd_c(\vw^{(t)}) \\
    =& \vw^{(t)} - \lr\E_c\brackets{(\mI - (\mI - \eta\mA_c)^{H})\parenth{\vw^{(t)} - \vw_c^*}} \\
    =& \vw^{(t)} - \lr\E_c\brackets{(\mI - (\mI - \eta\mA_c)^{H})\parenth{\vw^{(t)} - \vw^*}} \nonumber \\
        &- \lr\E_c\brackets{(\mI - (\mI - \eta\mA_c)^{H})\parenth{\vw^*-\vw_c^*}}.
\end{align}
Subtracting $\vw^*$ on both sides and setting $\lr=1$, we have
\begin{align}
    \vw^{(t+1)} - \vw^*
    =& \E_c\brackets{\parenth{\mI - \eta\mA_c}^H}\parenth{\vw^{(t)} - \vw^*} - \E_c\brackets{\underbrace{(\mI - (\mI - \eta\mA_c)^{H})\parenth{\vw^*-\vw_c^*}}_{\eta H \pseudograd_c(\vw^*)}} \\
    =& \E_c\brackets{\parenth{\mI - \eta\mA_c}^H}\parenth{\vw^{(t)} - \vw^*} - \eta H \pseudograd(\vw^*)
\end{align}
where $\pseudograd(\vw^*) = \E_c\pseudograd_c(\vw^*)$. Assume that $\pseudograd(\vw^*)=0$, then we have
\begin{align}
    \vw^{(t+1)} - \vw^* = \brackets{\E_c\brackets{\parenth{\mI - \eta\mA_c}^H}}^{t+1}(\vw^{(0)} - \vw^*),
\end{align}
which proves the desired result.
  
In the following, we are going to prove $\pseudograd(\vw^*)=0$ almost surely as $M \rightarrow \infty$ on this synthetic problem. According to the definition of $\mA_c, \vb_c$, we obtain that
\begin{align}
    \pseudograd(\vw^*)
    =& \E_c\brackets{(\mI - (\mI - \eta\mA_c)^{H})\mA_c^{-1}\vb_c} \\
    =& \E_c\brackets{\underbrace{(\mI - (\mI - \eta\mA_c)^{H})\mA_c^{-1}\frac{1}{n}\sum_{i=1}^{n}\vx_{c,i}\epsilon_{c,i}}_{\xi_c}} .
\end{align}
Since the noise $\epsilon_{c,\cdot}$ are independent of $\vx_{c,\cdot}$, $\xi_c$ is a zero-mean random variable that depends on client $c$. Since we have assumed that all $\|\vx_{c,i}\|$ and $\epsilon_{c,i}$ have bounded variance, we know that $\E [\xi_c^2] < \infty$. Since we have a uniform weighting on the $M$ clients, it follows $\E_c [\xi_c] = O(1/\sqrt{M})$ with probability $1 - o_M(1)$, and as $M \to \infty$, we have $\E_c [\xi_c] \to 0$ almost surely. 

\end{proof}

\section{Proof of \Cref{lem:average_drift}}
Before diving into the proof details, we would like to first introduce a lemma, which will be frequently used in the subsequent sections.
\begin{lem}[Mean Value Theorem]\label{lem:mvt}
Suppose function $F$ is twice differentiable, then
\begin{align}
    \nabla F_c(\vw) - \nabla F_c(\vu) = \mA_c(\vw, \vu)\cdot (\vw-\vu)
\end{align}
where $\mA_c(\vw, \vu)=\int_0^1 \nabla^2 F_c(\vu + s(\vw - \vu)) \text{d}s$. If $\mu \preceq \nabla^2 F_c \preceq L$, then it follows that $\mu \preceq \mA_c(\vw,\vu)\preceq L$ for any $\vw,\vu$.
\end{lem}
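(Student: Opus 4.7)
The approach is the standard vector-valued analogue of the fundamental theorem of calculus: parameterize the line segment from $\vu$ to $\vw$ and integrate the derivative of the gradient along it. Specifically, I would define the auxiliary function $g:[0,1]\to\R^d$ by $g(s) = \nabla F_c(\vu + s(\vw - \vu))$, so that $g(0) = \nabla F_c(\vu)$ and $g(1) = \nabla F_c(\vw)$. Since $F_c$ is twice differentiable, the chain rule gives $g'(s) = \nabla^2 F_c(\vu + s(\vw - \vu))\,(\vw - \vu)$. Applying the fundamental theorem of calculus componentwise to $g$ then yields
\begin{align}
    \nabla F_c(\vw) - \nabla F_c(\vu)
    &= g(1) - g(0) = \int_0^1 g'(s)\, ds \\
    &= \left[\int_0^1 \nabla^2 F_c(\vu + s(\vw - \vu))\, ds\right](\vw - \vu),
\end{align}
where in the last step the constant vector $(\vw - \vu)$ is pulled out of the integral. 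Identifying the bracketed matrix with $\mA_c(\vw,\vu)$ gives the first claim.

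For the spectral bounds, the plan is to show that the uniform bounds on the integrand transfer to the integral. Fix any $\vv \in \R^d$. By the hypothesis $\mu \preceq \nabla^2 F_c \preceq L$, for every $s \in [0,1]$ we have $\mu \|\vv\|^2 \leq \vv^\top \nabla^2 F_c(\vu + s(\vw - \vu))\, \vv \leq L \|\vv\|^2$. Integrating this pointwise inequality over $s \in [0,1]$ (an interval of length one) and interchanging the integral with the linear quadratic form yields $\mu \|\vv\|^2 \leq \vv^\top \mA_c(\vw,\vu)\, \vv \leq L \|\vv\|^2$. Since $\mA_c(\vw,\vu)$ is an integral of symmetric matrices and is therefore symmetric itself, this chain of inequalities is exactly the statement $\mu I \preceq \mA_c(\vw,\vu) \preceq L I$.

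I do not anticipate a substantive obstacle: the result is a standard fact from multivariable calculus. The only minor regularity concern is that $\nabla^2 F_c$ must be sufficiently well-behaved (e.g.\ continuous) along the segment so that the Riemann integral is well-defined and differentiation under the integral is justified; this is implicitly guaranteed by the twice-differentiability hypothesis together with the Lipschitz smoothness assumption made elsewhere in the paper.
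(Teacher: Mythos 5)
Your proof is correct. The paper itself states this lemma without proof (it is invoked as a standard fact), and your argument — applying the fundamental theorem of calculus to $g(s) = \nabla F_c(\vu + s(\vw-\vu))$ and integrating the pointwise quadratic-form bounds to get $\mu \preceq \mA_c \preceq L$ — is exactly the standard derivation one would supply, including the appropriate caveat about regularity of the Hessian along the segment.
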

According to \Cref{lem:mvt}, for any local gradient at client $c$, we have
\begin{align}
    \nabla F_c(\vw^{(t,h+1)})
    =& \nabla F_c(\vw^{(t,h)}) + \mB_c^{(t,h)}(\vw^{(t,h+1)} - \vw^{(t,h)}) \\
    =& (\mI - \eta \mB_c^{(t,h)}) \nabla F_c(\vw^{(t,h)}) \\
    =& \prod_{s=0}^{h} (\mI - \eta\mB_c^{(t,s)}) \nabla F_c(\vw^{(t)})
\end{align}
where $\mB_c^{(t,h)}$ is an integral of the Hessian matrix. Then, recall the definition of pseudo-gradient~\Cref{eq:quadratic-pseudo-grad}, we have
\begin{align}
    \pseudograd_c(\vw^{(t)})
    =& \frac{1}{H}\sum_{h=0}^{H-1} \nabla F_c(\vw^{(t,h)}) \\
    =& \brackets{\frac{\mI}{H}+\frac{1}{H}\sum_{h=1}^{H-1}\prod_{s=0}^{h-1}(\mI - \eta\mB_c^{(t,s)})} \nabla F_c(\vw^{(t)}). \label{eqn:linear_transform}
\end{align}
From \Cref{eqn:linear_transform}, one can observe that the pseudo-gradient is a linear transformation of the local gradient at the starting point. As a result, we can write the \measure as follows:
\begin{align}
    \rho 
    =& \norm{\E_c[\pseudograd_c(\vw^*) - \nabla F_c(\vw^*)]} \\
    =& \norm{\E_c \underbrace{\brackets{\mI-\frac{\mI}{H}-\frac{1}{H}\sum_{h=1}^{H-1}\prod_{s=0}^{h-1}(\mI - \eta\mB_c^{(*,s)})}}_{\mP_c} \nabla F_c(\vw^*)}.
\end{align}
When local objective functions are quadratic, we have $\mB_c^{(*,s)} = \nabla^2 F_c(\vw^*)$ for any $s$. Accordingly, it follows that
\begin{align}
    \mP_c 
    =& \mI - \frac{\mI}{H} - \frac{1}{H}\sum_{h=1}^{H-1}\prod_{s=0}^{h-1}(\mI - \eta\mB_c^{(*,s)}) \\
    =& \mI - \frac{1}{H}\sum_{h=0}^{H-1} (\mI - \eta \nabla^2 F_c(\vw^*))^h \\
    =& \frac{1}{H}\sum_{h=0}^{H-1} \brackets{ \mI - (\mI - \eta \nabla^2 F_c(\vw^*))^h}.
\end{align}
Here we complete the proof.

\section{Proof of \Cref{thm:one_round}}
\subsection{Preliminaries}
In this subsection, we will first introduce several useful lemmas, which relate to the properties of the deterministic pseudo-gradients.
\begin{lem}[\textbf{Convexity, Smoothness \& Co-coercivity}]\label{lem:properties}
When each local objective function $F_c$ is $L$-Lipschitz smooth and $\mu$-strongly convex, for any $\vw, \vu \in \R^d$, we have
\begin{align}
    \effmu \vecnorm{\vw - \vu} \leq& \inprod{\pseudograd(\vw) - \pseudograd(\vu)}{\vw - \vu} \leq \effLip \vecnorm{\vw - \vu}, \label{eqn:pseudo-grad-properties1}\\
    \vecnorm{\pseudograd(\vw) - \pseudograd(\vu)} \leq& \effLip\inprod{\pseudograd(\vw) - \pseudograd(\vu)}{\vw - \vu}, \label{eqn:pseudo-grad-properties2}
\end{align}
where $\effmu = [1-(1-\eta \mu)^H]/(\eta H)$ and $\effLip = [1+(1-\eta \mu)^H]/(\eta H)$.
\end{lem}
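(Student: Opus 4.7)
The plan is to analyze each client's pseudo-gradient via the single gradient-step map $S_c(\vw) \triangleq \vw - \eta\nabla F_c(\vw)$, so that $\vw_{c,\text{GD}}^{(H)} = S_c^H(\vw)$ and $\pseudograd_c(\vw) = (\vw - S_c^H(\vw))/(\eta H)$. First I would establish the one-step contraction $\norm{S_c(\vw) - S_c(\vu)} \leq (1-\eta\mu)\norm{\vw-\vu}$ under $\eta \leq 1/L$: \Cref{lem:mvt} lets us write $S_c(\vw) - S_c(\vu) = (\mI - \eta \mA_c(\vw,\vu))(\vw-\vu)$ where $\mu\mI \preceq \mA_c(\vw,\vu) \preceq L\mI$, and since this integrated Hessian is symmetric, for $\eta \leq 1/L$ the matrix $\mI - \eta \mA_c(\vw,\vu)$ is symmetric positive semidefinite with eigenvalues in $[1 - \eta L, 1 - \eta\mu] \subseteq [0, 1-\eta\mu]$, giving spectral norm at most $1-\eta\mu$. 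Applying this bound inductively across the $H$ inner steps yields $\norm{S_c^H(\vw) - S_c^H(\vu)} \leq \lambda \norm{\vw-\vu}$ with $\lambda \triangleq (1-\eta\mu)^H$.

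Next, to prove \Cref{eqn:pseudo-grad-properties1}, let $\vd \triangleq \vw - \vu$ and $\vd_c \triangleq S_c^H(\vw) - S_c^H(\vu)$, so that $\pseudograd_c(\vw) - \pseudograd_c(\vu) = (\vd - \vd_c)/(\eta H)$. Taking the inner product with $\vd$ gives $\inprod{\pseudograd_c(\vw) - \pseudograd_c(\vu)}{\vd} = (\vecnorm{\vd} - \inprod{\vd_c}{\vd})/(\eta H)$, and Cauchy–Schwarz combined with $\norm{\vd_c} \leq \lambda\norm{\vd}$ yields $|\inprod{\vd_c}{\vd}| \leq \lambda \vecnorm{\vd}$. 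This directly produces the two-sided per-client bound $\effmu \vecnorm{\vd} \leq \inprod{\pseudograd_c(\vw) - \pseudograd_c(\vu)}{\vd} \leq \effLip \vecnorm{\vd}$, and averaging over $c$ (using linearity of the inner product in the pseudo-gradient slot) yields \Cref{eqn:pseudo-grad-properties1}.

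For the co-coercivity bound \Cref{eqn:pseudo-grad-properties2}, I would first establish the per-client version $\vecnorm{\pseudograd_c(\vw) - \pseudograd_c(\vu)} \leq \effLip \inprod{\pseudograd_c(\vw) - \pseudograd_c(\vu)}{\vd}$. Expanding $\vecnorm{\vd - \vd_c} = \vecnorm{\vd} - 2\inprod{\vd}{\vd_c} + \vecnorm{\vd_c}$ and clearing a common factor of $1/(\eta H)$, this inequality algebraically reduces to $\vecnorm{\vd_c} \leq \lambda \vecnorm{\vd} + (1-\lambda)\inprod{\vd}{\vd_c}$. The worst case is $\inprod{\vd}{\vd_c} = -\norm{\vd}\norm{\vd_c}$; setting $t \triangleq \norm{\vd_c}/\norm{\vd} \in [0, \lambda]$ further reduces the claim to the scalar inequality $t^2 + (1-\lambda)t - \lambda \leq 0$, whose roots are $-1$ and $\lambda$, so the inequality holds throughout $[0, \lambda]$. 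Finally I aggregate over clients via Jensen's inequality applied to the squared norm: $\vecnorm{\pseudograd(\vw) - \pseudograd(\vu)} \leq \E_c\vecnorm{\pseudograd_c(\vw) - \pseudograd_c(\vu)} \leq \effLip \E_c\inprod{\pseudograd_c(\vw) - \pseudograd_c(\vu)}{\vd} = \effLip \inprod{\pseudograd(\vw) - \pseudograd(\vu)}{\vd}$.

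The main obstacle is the per-client co-coercivity in the regime where $\inprod{\vd}{\vd_c} < 0$; otherwise both sides behave nicely and the bound is essentially immediate. This negative regime is precisely where the stated constant $\effLip = (1+(1-\eta\mu)^H)/(\eta H)$ becomes necessary, since a smaller leading coefficient would fail to absorb the cross term $-2\inprod{\vd}{\vd_c}$ in the expansion of $\vecnorm{\vd - \vd_c}$ using only the contraction bound $\norm{\vd_c} \leq \lambda \norm{\vd}$. A secondary subtlety is that the integrated Hessians arising at the $H$ successive iterates are point-dependent and generally non-commuting; this is sidestepped by using only the operator-norm bound from \Cref{lem:mvt} inductively, rather than attempting to diagonalize or telescope the composed Jacobian in closed form.
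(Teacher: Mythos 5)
Your proposal is correct and follows essentially the same route as the paper's own proof: the mean-value-theorem factorization of the local update map into a product of matrices $\mI - \eta\mD_c^{(k)}$, the operator-norm contraction $(1-\eta\mu)^H$ (which, as you note, requires $\eta \leq 1/L$, a condition the paper uses implicitly), Cauchy--Schwarz for the two-sided bound in \Cref{eqn:pseudo-grad-properties1}, and the same quadratic expansion of $\vecnorm{\vd-\vd_c}$ for \Cref{eqn:pseudo-grad-properties2}. The only difference is organizational: you establish co-coercivity per client (via the clean scalar inequality $t^2+(1-\lambda)t-\lambda\le 0$ on $[0,\lambda]$) and then average with Jensen, whereas the paper applies Jensen to the $H$-step iterates first and carries out the equivalent algebra on the aggregated quantities; both orderings yield the identical constant $\effLip$.
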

\begin{proof}
Let us first focus on the pseudo-gradient on a specific client $c$. According to the definition of pseudo-gradient, we have
\begin{align}
    \eta H [\pseudograd_c(\vw) - \pseudograd_c(\vu)] 
    =& \vw - \vu - (\vw_c^{(H)} - \vu_c^{(H)}) \\
    =& \vw - \vu \nonumber \\
     & - [\vw_c^{(H-1)} - \vu_c^{(H-1)} - \eta(\nabla F_c(\vw_c^{(H-1)}) - \nabla F_c(\vu_c^{(H-1)}))] \label{lem2:1}\\
    =& \vw - \vu - (I - \eta \mD_c^{(H-1)})(\vw_c^{(H-1)} - \vu_c^{(H-1)}) \label{lem2:2}
\end{align}
where \Cref{lem2:2} follows from \Cref{lem:mvt} and $\mD_c$ is a symmetric matrix satisfying $\mu \preceq \mD_c \preceq L$. Repeating the above procedure, we can obtain that
\begin{align}
    \eta H [\pseudograd_c(\vw) - \pseudograd_c(\vu)] 
    =& \vw - \vu - \prod_{k=0}^{H-1}(\mI - \eta \mD_c^{(k)})(\vw - \vu) \\
    =& \brackets{\mI - \prod_{k=0}^{H-1}(\mI - \eta \mD_c^{(k)})}(\vw - \vu).
\end{align}
As a consequence, we have
\begin{align}
    \eta H \inprod{\pseudograd_c(\vw) - \pseudograd_c(\vu)}{\vw - \vu}
    =& \vecnorm{\vw - \vu} - \inprod{\prod_{k=0}^{H-1}(\mI - \eta \mD_c^{(k)})(\vw - \vu)}{\vw - \vu}. 
\end{align}
Note that, due to Cauchy–Schwarz inequality,
\begin{align}
    \left |\inprod{\prod_{k=0}^{H-1}(\mI - \eta \mD_c^{(k)})(\vw - \vu)}{\vw - \vu} \right |
    \leq& \prod_{k=0}^{H-1}\norm{\mI - \eta \mD_c^{(k)}} \vecnorm{\vw - \vu} \\
    \leq& (1-\eta \mu)^H \vecnorm{\vw - \vu}.
\end{align}
That is,
\begin{align}
    -(1-\eta\mu)^H \vecnorm{\vw - \vu}
    \leq \inprod{\prod_{k=0}^{H-1}(\mI - \eta \mD_c^{(k)})(\vw - \vu)}{\vw - \vu} 
    \leq (1-\eta\mu)^H \vecnorm{\vw - \vu}.
\end{align}
It follows that,
\begin{align}
    \frac{1-(1-\eta\mu)^H}{\eta H}\vecnorm{\vw - \vu} 
    \leq \inprod{\pseudograd_c(\vw) - \pseudograd_c(\vu)}{\vw - \vu}
    \leq \frac{1+(1-\eta\mu)^H}{\eta H}\vecnorm{\vw - \vu}. 
\end{align}
Taking the average over all clients, we complete the proof of \Cref{eqn:pseudo-grad-properties1}.

Next, we are going to prove \Cref{eqn:pseudo-grad-properties2}. Note that
\begin{align}
    \norm{\vw_c^{(H)} - \vu_c^{(H)}}
    =& \norm{\prod_{k=0}^{H-1}(\mI - \eta \mD_c^{(k)})(\vw - \vu)} \\
    \leq& \prod_{k=0}^{H-1}\norm{\mI - \eta \mD_c^{(k)}}\norm{\vw - \vu} \\
    \leq& (1-\eta\mu)^H \norm{\vw - \vu}.
\end{align}
As a result, we have
\begin{align}
    \vecnorm{\vw^{(H)} - \vu^{(H)}}
    =\vecnorm{\E_c\vw_c^{(H)} - \E_c\vu_c^{(H)}}
    \leq& \E_c\vecnorm{\vw_c^{(H)} - \vu_c^{(H)}} \\
    \leq& [(1-\eta \mu)^H]^2 \vecnorm{\vw - \vu}.
\end{align}
Then, according to the definition of pseudo-gradients, one can obtain
\begin{align}
    \eta^2H^2\vecnorm{\pseudograd(\vw) - \pseudograd(\vu)}
    =& \vecnorm{\vw - \vu - \vw^{(H)} + \vu^{(H)}} \\
    =& \vecnorm{\vw - \vu} + \vecnorm{\vw^{(H)} - \vu^{(H)}} - 2\inprod{\vw - \vu}{\vw^{(H)} - \vu^{(H)}} \\
    \leq& [1+(1-\eta\mu)^H]\vecnorm{\vw - \vu} - 2\inprod{\vw - \vu}{\vw^{(H)} - \vu^{(H)}} \nonumber \\
        &- (1-\eta\mu)^H\brackets{1-(1-\eta\mu)^H}\vecnorm{\vw - \vu} \\
    =& [1+(1-\eta\mu)^H]\brackets{\vecnorm{\vw - \vu} - \inprod{\vw - \vu}{\vw^{(H)} - \vu^{(H)}}} \nonumber \\
        &- (1- (1-\eta\mu)^H)\inprod{\vw - \vu}{\vw^{(H)} - \vu^{(H)}} \nonumber \\
        &- (1-\eta\mu)^H\brackets{1-(1-\eta\mu)^H}\vecnorm{\vw - \vu} \\
    =&[1+(1-\eta\mu)^H]\brackets{\vecnorm{\vw - \vu} - \inprod{\vw - \vu}{\vw^{(H)} - \vu^{(H)}}} \nonumber \\
        &+ (1- (1-\eta\mu)^H)\brackets{\vecnorm{\vw - \vu} - \inprod{\vw - \vu}{\vw^{(H)} - \vu^{(H)}}} \nonumber \\
        &- \brackets{1+ (1-\eta\mu)^H}\brackets{1-(1-\eta\mu)^H}\vecnorm{\vw - \vu}
\end{align}
Note that $\eta H \inprod{\vw - \vu}{\pseudograd(\vw) - \pseudograd(\vu)} = \vecnorm{\vw - \vu} -\inprod{\vw - \vu}{\vw^{(H)} - \vu^{(H)}}$ and $\eta H \effLip = 1 + (1-\eta\mu)^H$, $\eta H \effmu = 1 - (1-\eta\mu)^H$, we have
\begin{align}
    \vecnorm{\pseudograd(\vw) - \pseudograd(\vu)}
    \leq& (\effLip + \effmu)\inprod{\vw - \vu}{\pseudograd(\vw) - \pseudograd(\vu)} - \effLip\effmu\vecnorm{\vw - \vu} \\
    =& \effLip\inprod{\vw - \vu}{\pseudograd(\vw) - \pseudograd(\vu)} \nonumber \\
        & -\effmu \brackets{\effLip\vecnorm{\vw - \vu} - \inprod{\vw - \vu}{\pseudograd(\vw) - \pseudograd(\vu)}} \\
    \leq& \effLip\inprod{\vw - \vu}{\pseudograd(\vw) - \pseudograd(\vu)}
\end{align}
where the last inequality is due to the smoothness of the pseudo-gradient~\Cref{eqn:pseudo-grad-properties1}.
\end{proof}

\subsection{Main Proof}
In the analysis below, we first analyze the training progress within one round. Suppose the current global model is $\vw$ and the next round's global model is $\vw^+$. Without otherwise stated, the expectation $\E$ and variance $\Var$ are conditioned on the current global model $\vw$. For the ease of writing, we define effective learning rate $\efflr = \lr \eta H$.

First, according to the update rule of \fedavg, we have
\begin{align}
    \E\vecnorm{\vw^+ - \vw^*}
    =& \E\vecnorm{\vw - \efflr \hat{\pseudograd}(\vw) - \vw^*} \\
    =& \vecnorm{\vw - \efflr\E[\hat{\pseudograd}(\vw)] - \vw^*} + \efflr^2\Var[\hat{\pseudograd}(\vw)] \\
    =& \vecnorm{\vw - \vw^*} + \efflr^2 \vecnorm{\E[\hat{\pseudograd}(\vw)]} - 2\efflr\inprod{\vw-\vw^*}{\E[\hat{\pseudograd}(\vw)]} + \efflr^2\Var[\hat{\pseudograd}(\vw)]
\end{align}
Also, note that $\E[\hat{\pseudograd}(\vw)] = \pseudograd(\vw) - \pseudograd(\vw^*) + \pseudograd(\vw^*) + \delta(\vw)$. So one can obtain
\begin{align}
    \E\vecnorm{\vw^+ - \vw^*}
    \leq& \vecnorm{\vw - \vw^*} + 2\efflr^2 \vecnorm{\pseudograd(\vw) - \pseudograd(\vw^*)} - 2\efflr\inprod{\vw-\vw^*}{\pseudograd(\vw) - \pseudograd(\vw^*)} \nonumber \\
        & +2\efflr^2\vecnorm{\pseudograd(\vw^*)+\delta(\vw)} -2\efflr\inprod{\vw - \vw^*}{\pseudograd(\vw^*)+\delta(\vw)} + \efflr^2\Var[\hat{\pseudograd}(\vw)] \\
    \leq& (1-\efflr \effmu)\vecnorm{\vw - \vw^*} + 2\efflr^2\vecnorm{\pseudograd(\vw) - \pseudograd(\vw^*)} - \efflr\inprod{\vw-\vw^*}{\pseudograd(\vw) - \pseudograd(\vw^*)} \nonumber \\
        & +2\efflr^2\vecnorm{\pseudograd(\vw^*)+\delta(\vw)} -2\efflr\inprod{\vw - \vw^*}{\pseudograd(\vw^*)+\delta(\vw)} + \efflr^2\Var[\hat{\pseudograd}(\vw)] \label{eqn:mainproof_1}
\end{align}
where the first inequality uses the fact $\vecnorm{a+b} \leq 2\vecnorm{a} + 2\vecnorm{b}$, and the second inequality comes from the strongly-convexity of the pseudo-gradient. Now let us check the value of the following terms:
\begin{align}
    T_1 
    =& 2\efflr\vecnorm{\pseudograd(\vw) - \pseudograd(\vw^*)} - \inprod{\vw-\vw^*}{\pseudograd(\vw) - \pseudograd(\vw^*)} - 2\inprod{\vw - \vw^*}{\pseudograd(\vw^*)+\delta(\vw)}.
\end{align}
According to the co-coercivity of the pseudo-gradient, we have
\begin{align}
    T_1 
    \leq& \brackets{2\efflr\effLip - 1}\inprod{\vw-\vw^*}{\pseudograd(\vw) - \pseudograd(\vw^*)}- 2\inprod{\vw - \vw^*}{\pseudograd(\vw^*)+\delta(\vw)} \\
    \leq& \brackets{2\efflr\effLip - 1}\inprod{\vw-\vw^*}{\pseudograd(\vw) - \pseudograd(\vw^*)} + \epsilon\vecnorm{\vw - \vw^*} + \frac{1}{\epsilon}\vecnorm{\pseudograd(\vw^*) + \delta(\vw)}
\end{align}
where the last inequality comes from Young's inequality. When $\efflr \effmu \leq \efflr \effLip \leq 1/4$, we have
\begin{align}
    T_1 
    \leq -\frac{\effmu}{2} \vecnorm{\vw - \vw^*}+ \epsilon\vecnorm{\vw - \vw^*} + \frac{1}{\epsilon}\vecnorm{\pseudograd(\vw^*) + \delta(\vw)} 
    \leq \frac{2\vecnorm{\pseudograd(\vw^*) + \delta(\vw)}}{\effmu} \label{eqn:final_t1}
\end{align}
where the last inequality is obtained by setting $\epsilon = \effmu / 2$. Then, substituting \Cref{eqn:final_t1} into \Cref{eqn:mainproof_1} and noting that $\efflr\effmu \leq \efflr\effLip \leq 1/4$ (that is, $\efflr \leq 1/(4\effmu)$), 
\begin{align}
    \E\vecnorm{\vw^+ - \vw^*}
    \leq& (1-\efflr \effmu)\vecnorm{\vw - \vw^*} + \parenth{2\efflr^2 + \frac{2\efflr}{\effmu}}\vecnorm{\pseudograd(\vw^*) + \delta(\vw)} + \efflr^2\Var[\hat{\pseudograd}(\vw)] \\
    \leq& (1-\efflr \effmu)\vecnorm{\vw - \vw^*} + \frac{5\efflr}{2\effmu}\vecnorm{\pseudograd(\vw^*) + \delta(\vw)} + \efflr^2\Var[\hat{\pseudograd}(\vw)] \\
    \leq& (1-\efflr \effmu)\vecnorm{\vw - \vw^*} + \efflr^2\Var[\hat{\pseudograd}(\vw)] + \frac{5\efflr}{\effmu}\vecnorm{\delta(\vw)} + \frac{5\efflr}{\effmu}\vecnorm{\pseudograd(\vw^*)} \\ \label{eqn:mainproof_2}
    \leq& (1-\efflr \effmu)\vecnorm{\vw - \vw^*} + \efflr^2\max_{\vw}\Var[\hat{\pseudograd}(\vw)] \nonumber \\
        &+ \frac{5\efflr}{\effmu}\max_{\vw}\vecnorm{\delta(\vw)} + \frac{5\efflr}{\effmu}\vecnorm{\pseudograd(\vw^*)} 
\end{align}
After total $T$ communication rounds and taking the total expectation, we end up with
\begin{align}
    \E\vecnorm{\vw^{(T)} - \vw^*}
\leq& (1-\efflr \effmu)^T \vecnorm{\vw^{(0)} - \vw^*} + \frac{\efflr}{\effmu} \max_{\vw} \Var[\hat{\pseudograd}(\vw)] \nonumber \\
    &+ \frac{5}{\effmu^2}\max_{\vw} \E_c\vecnorm{\delta_c(\vw)} + \frac{5\rho^2}{\effmu^2}.
\end{align}
When $\eta H \mu \leq 1$, one can easily validate that
\begin{align}
    \effmu 
    = \frac{1-(1-\eta\mu)^H}{\eta H}
    \geq \frac{\mu}{2}.
\end{align}
So it follows that
\begin{align}
    \E\vecnorm{\vw^{(T)} - \vw^*}
\leq& (1-\frac{1}{2}\lr \eta H \mu)^T \vecnorm{\vw^{(0)} - \vw^*} + \frac{2\lr \eta H}{\mu} \max_{\vw} \Var[\hat{\pseudograd}(\vw)] \nonumber \\
    &+ \frac{20}{\mu^2}\max_{\vw} \E_c\vecnorm{\delta_c(\vw)} + \frac{20\rho^2}{\mu^2}.
\end{align}
At last, in order to satisfy $\efflr \effLip \leq 1/4$, one can set $\lr \leq 1/8$, such that
\begin{align}
    \efflr\effLip = \lr \eta H \cdot \frac{1 + (1-\eta \mu)^H}{\eta H} = \lr (1 + (1-\eta \mu)^H) \leq 2\lr \leq \frac{1}{4}.
\end{align}

\section{Bound on Iterate Bias}
In this section, we will provide an upper bound for the iterate bias \Cref{eqn:iterate_bias}. According to the local update rules, we have
\begin{align}
    \norm{\vw_{c,\text{GD}}^{(H)} - \E[\vw_c^{(H)}]}
    =& \norm{\vw_{c,\text{GD}}^{(H-1)} - \E[\vw_c^{(H-1)}] - \eta \nabla F_c(\vw_{c,\text{GD}}^{(H-1)}) + \eta \E[\nabla F_c(\vw_c^{(H-1)})]} \\
    \leq& \norm{\vw_{c,\text{GD}}^{(H-1)} - \E[\vw_c^{(H-1)}] - \eta \nabla F_c(\vw_{c,\text{GD}}^{(H-1)}) + \eta \nabla F_c(\E[\vw_c^{(H-1)}])} \nonumber \\
        &+ \eta \norm{\E[\nabla F_c(\vw_c^{(H-1)})] - \nabla F_c(\E[\vw_c^{(H-1)}])} \\
    \leq& (1-\eta \mu)\norm{\vw_{c,\text{GD}}^{(H-1)} - \E[\vw_c^{(H-1)}]} \nonumber \\
        &+ \eta \norm{\E[\nabla F_c(\vw_c^{(H-1)})] - \nabla F_c(\E[\vw_c^{(H-1)}])}
\end{align}
For the second term, we have
\begin{align}
    \vecnorm{\E[\nabla F_c(\vw_c^{(H-1)})] - \nabla F_c(\E[\vw_c^{(H-1)}])}
    \leq& \E\vecnorm{\nabla F_c(\vw_c^{(H-1)}) - \nabla F_c(\E[\vw_c^{(H-1)}])} \\
    \leq& L^2 \E\vecnorm{\vw_c^{H-1} - \E[\vw_c^{(H-1)}]} \\
    \leq& 2 \eta^ 2 L^2 \sigma^2 (H-1)
\end{align}
where the last inequality comes from previous works~\cite{khaled2020tighter,glasgow2021sharp}. As a result, one can obtain
\begin{align}
    \norm{\vw_{c,\text{GD}}^{(H)} - \E[\vw_c^{(H)}]}
    \leq& (1-\eta \mu)\norm{\vw_{c,\text{GD}}^{(H-1)} - \E[\vw_c^{(H-1)}]} + \sqrt{2}\eta^2 L \sigma (H-1)^{\frac{1}{2}} \\
    \leq& \sqrt{2}\eta^2 L\sigma\sum_{h=0}^{H-1} (1-\eta\mu)^{H-1-h} h^{\frac{1}{2}} \\
    \leq& \sqrt{2}\eta^2 L \sigma \brackets{\sum_{h=0}^{H-1} (1-\eta\mu)^{2(H-1-h)}}^{\frac{1}{2}} \brackets{\sum_{h=0}^{H-1} h}^\frac{1}{2} \\
    \leq& \sqrt{2}\eta^2 L \sigma \brackets{\sum_{h=0}^{H-1} (1-\eta\mu)^{H-1-h}}^{\frac{1}{2}} \brackets{\sum_{h=0}^{H-1} h}^\frac{1}{2} \\
    =& \brackets{\frac{1 - (1-\eta \mu)^H}{\eta \mu H}}^{\frac{1}{2}} \eta^2 L \sigma H (H-1)^{\frac{1}{2}}.
\end{align}
According to the definition of $\delta(\vw)$, we obtain
\begin{align}
    \E_c\vecnorm{\delta_c(\vw)}
    \leq \E_c\vecnorm{\vw_{c,\text{GD}}^{(H)} - \E[\vw_c^{(H)}]}
    \leq& \frac{\effmu \eta^2 L^2 \sigma^2 (H-1)}{\mu} \leq \eta^2 L^2 \sigma^2 (H-1)
\end{align}
where the last inequality comes from the fact that $\effmu \leq \mu$.

\section{Proof of \Cref{corollary:rate}}
\subsection{Deterministic Setting}
When clients perform local GD in each round, there is no stochastic noise. So the error upper bound \Cref{eqn:one_round} can be simplified as follows
\begin{align}
    \vecnorm{\vw^{(T)} - \vw^*}
\leq (1-\frac{1}{2}\lr \eta H \mu)^T \vecnorm{\vw^{(0)} - \vw^*}+ \frac{20\rho^2}{\mu^2}.
\end{align}
If $H\mu \geq L$, then the maximal learning rate is $\eta = 1/H\mu$. When $\lr=1/8$, the upper bound becomes 
\begin{align}
    \vecnorm{\vw^{(T)} - \vw^*}
    \leq& \parenth{1-\frac{1}{16}}^T\vecnorm{\vw^{(0)} - \vw^*} + \frac{20\rho^2}{\mu^2}\\
    \leq& \exp\parenth{-\frac{T}{16}}\vecnorm{\vw^{(0)} - \vw^*}+ \frac{20\rho^2}{\mu^2}.
\end{align}
If $H\mu \leq L$, then the maximal learning rate is $\eta = 1/L$. When $\lr=1/8$, the upper bound becomes
\begin{align}
    \vecnorm{\vw^{(T)} - \vw^*}
    \leq& \parenth{1-\frac{H\mu}{16L}}^T\vecnorm{\vw^{(0)} - \vw^*} + \frac{20\rho^2}{\mu^2}\\
    \leq& \exp\parenth{-\frac{\mu HT}{16L}}\vecnorm{\vw^{(0)} - \vw^*}+ \frac{20\rho^2}{\mu^2}.
\end{align}
We can summarize the above two bounds as follows:
\begin{align}
    \vecnorm{\vw^{(T)} - \vw^*}
    \leq& \exp\parenth{-\frac{T}{16\kappa}\min\{\kappa, H\}}\vecnorm{\vw^{(0)} - \vw^*}+ \frac{20\rho^2}{\mu^2} \\
    =& \gO\parenth{\exp\parenth{-\frac{T}{16\kappa}\min\{\kappa, H\}} + \rho^2}.
\end{align}

\subsection{Stochastic Setting}
Substituting the upper bounds for $\Var[\pseudograd(\vw)]$ and $\delta(\vw)$ into \Cref{eqn:mainproof_2} and setting $\lr = 1/8$,
\begin{align}
    \E\vecnorm{\vw^{(t+1)} - \vw^*}
    \leq& (1-\efflr \effmu)\vecnorm{\vw^{(t)} - \vw^*} + \efflr^2\frac{2\sigma^2}{MH} + \frac{5\efflr^3}{\effmu}\frac{\sigma^2 L^2(H-1)}{\lr^2 H^2} + \frac{5\efflr}{\effmu}\vecnorm{\pseudograd(\vw^*)} \\
    \leq& (1-\efflr \effmu)\vecnorm{\vw^{(t)} - \vw^*} + \efflr^2\frac{2\sigma^2}{MH} + \efflr^3\frac{320 \sigma^2 L^2 }{\effmu H} + \efflr \frac{5\rho^2}{\effmu}.
\end{align}
After minor rearrangement, we can get
\begin{align}
    \E\vecnorm{\vw^{(t+1)} - \vw^*} - \frac{5\rho^2}{\effmu^2}
    \leq (1-\efflr \effmu)\brackets{\vecnorm{\vw^{(t)} - \vw^*} - \frac{5\rho^2}{\effmu^2}} + \efflr^2 \frac{2\sigma^2}{MH} + \efflr^3 \frac{320 \sigma^2 L^2}{\effmu H}.
\end{align}
After total $T$ communication rounds,
\begin{align}
    \E\vecnorm{\vw^{(t+1)} - \vw^*} - \frac{5\rho^2}{\effmu^2}
    \leq (1-\efflr \effmu)^T\underbrace{\brackets{\vecnorm{\vw^{(0)} - \vw^*} - \frac{5\rho^2}{\effmu^2}}}_{r_0} +  \frac{2\efflr\sigma^2}{\effmu MH} +  \frac{320 \efflr^2\sigma^2 L^2}{\effmu^2 H}.
\end{align}
If we set $\efflr = \frac{\nu}{\mu T}$, where $\nu = 2\ln(\max\{r_0 \mu^2 M H T / (8\sigma^2), r_0 \mu^4 H T^2 / (1280 L^2\sigma^2)\})$, then it follows that
\begin{align}
    \E\vecnorm{\vw^{(t+1)} - \vw^*} - \frac{20\rho^2}{\mu^2} 
    \leq& \frac{8 \sigma^2\nu}{\mu^2 M H T} + \frac{1280 \sigma^2 L^2 \nu}{\mu^4 H T^2} + \exp\parenth{ - \frac{\nu}{2}}r_0 \\
    \leq& \frac{8 \sigma^2(\nu+1)}{\mu^2 M H T} + \frac{1280 \sigma^2 L^2 (\nu+1)}{\mu^4 H T^2} \\
    =& \widetilde{\gO}\parenth{\frac{\sigma^2}{M H T} + \frac{\sigma^2}{H T^2}}
\end{align}
where $\widetilde{\gO}$ omits the logarithmic factors.

\section{Proof of \Cref{corollary:rate_quadratic}}
In the quadratic setting, the iterate bias is zero. So from \Cref{eqn:mainproof_2}, we get
\begin{align}
    \E\vecnorm{\vw^{(t+1)} - \vw^*} - \frac{5\rho^2}{\effmu^2}
    \leq& (1-\efflr \effmu)\brackets{\vecnorm{\vw^{(t)} - \vw^*} - \frac{5\rho^2}{\effmu^2}} + \efflr^2 \frac{2\sigma^2}{MH}.
\end{align}
Then, taking the total expectation on both sides, for the $T$-th round, we have
\begin{align}
    \E\vecnorm{\vw^{(T)} - \vw^*} - \frac{5\rho^2}{\effmu^2}
    \leq& (1-\efflr \effmu)^T \brackets{\vecnorm{\vw^{(0)} - \vw^*} - \frac{5\rho^2}{\effmu^2}} +  \frac{2\efflr\sigma^2}{\effmu MH}
\end{align}
According to \cite{stich2019unified}, with some constant learning rate $\efflr = \gO(1/\effmu T)$, one can directly obtain
\begin{align}
    \E\vecnorm{\vw^{(t+1)} - \vw^*} - \frac{5\rho^2}{\effmu^2}
    = \widetilde{\gO}\parenth{\exp\parenth{-\effmu \efflr_{\text{max}} T} + \frac{2\sigma^2}{\effmu^2 MH T}}.
\end{align}
where $\widetilde{\gO}$ omits the logarithmic factors. In addition, note that $\eta \leq \min\{1/(\mu H), 1/L\}$. Accordingly, $\efflr_{\max} = \min\{1/8,\mu H/(8L)\}$ and $\effmu > \mu /2$. As a consequence,
\begin{align}
    \E\vecnorm{\vw^{(t+1)} - \vw^*}
    =& \widetilde{\gO}\parenth{\exp\parenth{-\frac{T}{16\kappa}\min\{\kappa, H\}} + \frac{8\sigma^2}{\mu^2 MH T} + \frac{20\rho^2}{\mu^2}} \\
    =& \widetilde{\gO}\parenth{\exp\parenth{-\frac{T}{16\kappa}\min\{\kappa, H\}} + \frac{\sigma^2}{MH T} + \rho^2}.
\end{align}

\end{document}